\documentclass{article}

\PassOptionsToPackage{numbers,compress}{natbib}
\usepackage[main,preprint]{neurips_2026}
\renewcommand{\acksection}{\section*{Acknowledgments}}

\usepackage[utf8]{inputenc}
\usepackage[T1]{fontenc}
\usepackage[hypertexnames=false]{hyperref}
\usepackage{url}
\usepackage{booktabs}
\usepackage{amsfonts}
\usepackage{nicefrac}
\usepackage{microtype}
\usepackage{multirow}
\usepackage{graphicx}
\usepackage[table]{xcolor}
\usepackage{array}
\usepackage{calc}
\usepackage{amsmath}
\usepackage{amssymb}
\usepackage{mathtools}
\usepackage{amsthm}
\usepackage{bm}
\usepackage{algorithm}
\usepackage{algpseudocode}

\newcolumntype{L}[1]{>{\raggedright\arraybackslash}p{#1}}

\definecolor{ourblue}{RGB}{235,244,255}

\theoremstyle{plain}
\newtheorem{theorem}{Theorem}[section]
\newtheorem{proposition}[theorem]{Proposition}
\newtheorem{lemma}[theorem]{Lemma}
\newtheorem{corollary}[theorem]{Corollary}
\theoremstyle{definition}

\theoremstyle{remark}

\def\x{{\mathbf x}}
\def\y{{\mathbf y}}

\def\f{{\mathbf{f}}}
\def\u{{\mathbf u}}

\def\h{\mathbf{h}}

\def\w{\mathbf{w}}

\def\vf{{\mathbf F}}
\def\vu{{\mathbf U}}

\title{Continuous-depth Deep Gaussian Processes}
\author{%
  \parbox{\dimexpr\textwidth-2\tabcolsep\relax}{\centering
    Ying Li$^{1}$ \quad Sanyou Wu$^{2}$ \quad Zi Yang$^{3}$\\
    Qiaochu Xu$^{3}$ \quad Yuhao Liu$^{4}$ \quad Zhidi Lin$^{5}$\\
    Michael Minyi Zhang$^{3}$ \quad Petar Djuric$^{6}$ \quad Alexandre H. Thiery$^{1}$\thanks{Corresponding author: \texttt{a.h.thiery@nus.edu.sg}.}\\[5pt]
    {\normalfont\fontsize{9}{11}\selectfont
    $^{1}$National University of Singapore \quad $^{2}$Columbia University\\
    $^{3}$The University of Hong Kong \quad $^{4}$Capital One\\
    $^{5}$The Education University of Hong Kong \quad $^{6}$Stony Brook University
    }
  }
}
\date{}

\begin{document}
\maketitle

\begin{abstract}
Deep Gaussian processes (DGPs) are appealing Bayesian regression models, but their standard discrete-depth parameterization often becomes harder to optimize as depth grows. We argue that depth is better treated as a latent continuous evolution than as a long discrete stack. We therefore propose a continuous-depth formulation in which the latent representation follows a stochastic differential equation with a depth-indexed drift family equipped with a GP-based prior. This reframes the core design choice as the prior placed on that family, rather than adopting a purely neural continuous-depth parameterization
as in neural ODE or neural SDE models. We study two instantiations of this idea: CDGP, a direct GP prior over the state--depth drift, and FlowDGP, a flow-evolved prior that captures richer depth dependence at lower training cost. To
motivate the redesign, we analyze how the single-sample DGP objective can become increasingly sensitive under repeated layer composition, while residual DGP is a useful first repair that still remains a finite discrete construction. We also give pathwise sensitivity analysis for
the continuous-depth formulation. Experiments on synthetic, benchmark, and large-scale regression tasks show that residual DGP improves over plain DGP, while CDGP and FlowDGP provide a stronger overall story across the regimes we study.
\end{abstract}

\section{Introduction}

\label{sec:intro}

Deep Gaussian processes (DGPs) promise hierarchical probabilistic
regression, but in practice they become difficult to optimize as depth
grows. This is exactly the regime where a DGP should be most useful, yet the standard discrete stack often becomes least reliable there. The central question is therefore not only how to stabilize a deep composition, but how depth itself should be parameterized in a
probabilistic latent-function model.

Empirically, deeper DGPs often exhibit pathological behavior, with
additional layers degrading performance
\citep{duvenaud2014deep, dunlop2018howdeep}. As a result, DGPs are
usually kept shallow, which undercuts their original modeling appeal.
This practical difficulty is consistent with a simple sensitivity view
of discrete composition: repeated layerwise transformations can amplify
parameter perturbations and make optimization progressively more
brittle.

Prior work clarifies parts of this picture without resolving it.
Theoretical studies identified depth pathologies in deep compositions
\citep{duvenaud2014deep, dunlop2018howdeep}, while residual DGP variants
\citep{wyrwalresidual} introduced the most direct repair by keeping an
identity path. Residual DGP is therefore the right place to start, but
it still leaves depth represented as a finite stack of separately
parameterized layers. Other successful
alternatives, such as deep kernel learning \citep{wilson2016deep} or
thin DGPs \citep{de2023thin}, can improve predictive performance, and
related continuous-depth constructions such as differential GP flows
\citep{hegde2019deep} show that GP-based dynamics can be useful. Our
focus is narrower: preserve a GP-native latent-function view of
depth while changing the way depth itself is parameterized.

Plain discrete-depth DGPs admit an optimization pathology: repeated
layer composition can make the single-sample training objective
increasingly sensitive to parameter perturbations. Residual DGP is the
natural first fix because it weakens that amplification through an
identity shortcut. But it does not change the underlying parameterization of
depth. Once depth is treated as continuous latent evolution, the most
natural non-GP alternatives are neural ordinary differential equations (ODE) and neural stochastic differential equations (SDE) models. We
study the complementary design in which the deep stack is replaced by a
latent continuous-depth system whose drift family retains a GP-based
prior. Once this latent SDE is in place, the central design choice is
no longer the number of layers, but the prior placed on that drift
family. This leads to two concrete models: CDGP, which uses a direct
separable GP prior on the drift family, and FlowDGP, which uses a GP
prior on a base drift together with flow evolution across depth.

Our contributions are fourfold. First, we propose a continuous-depth
methodology for DGP regression in which the latent representation
evolves as an SDE with a depth-indexed drift family equipped with a
GP-based prior. Second, we study two priors on that drift, CDGP and
FlowDGP, within a shared sparse variational inference framework. Third,
we identify the repeated Jacobian products that control sensitivity in
discrete DGPs and derive a finite-horizon pathwise sensitivity bound for
the continuous-depth formulation. The stronger contractive corollary is
a specialized auxiliary result, not a property used to design our
models. Fourth, we show empirically that residual DGP is a
meaningful first repair of plain DGP, and that against both
discrete-depth baselines and the natural continuous-depth neural
alternatives, CDGP and FlowDGP are more stable or more competitive
across toy, benchmark, and scalable regression settings.
Continuous depth is a better way to represent depth in deep GP
regression than a long discrete stack, especially when the drift family
retains GP-based structure rather than being parameterized only by
neural dynamics.

\vspace{-0.05in}
\section{Background}
\vspace{-0.05in}

\label{sec:background}

We review only the ingredients needed for the continuous-depth
reformulation: GPs as priors over latent functions, DGPs as discrete
compositions of those functions. 

\textbf{Gaussian Process (GP).} A GP \citep{rasmussen2006gaussian}
defines a prior over scalar-valued functions
$f:\mathbb{R}^D \to \mathbb{R}$, written
$f(\mathbf{x}) \sim \mathcal{GP}(m(\mathbf{x}),
k(\mathbf{x},\mathbf{x}'))$, where $m(\mathbf{x})$ and
$k(\mathbf{x},\mathbf{x}')$ are the mean and covariance functions.
Unless otherwise stated, we assume zero mean and write
$k(\mathbf{x},\mathbf{x}')$ simply as $k$ when no ambiguity arises.
For vector-valued mappings $f:\mathbb{R}^D \to \mathbb{R}^P$, we model
each output dimension independently: $f_p(\cdot) \sim \mathcal{GP}(0,
k)$ for $p=1,\dots,P$. Thus $f$ is represented by $P$
scalar-valued GPs sharing the same kernel.

\textbf{Deep Gaussian Process (DGP).} A DGP \citep{damianou2013deep}
defines a hierarchy of latent representations through a composition of
GP mappings. Let $\mathbf{h}^{(\ell)} \in \mathbb{R}^{D_\ell}$ denote
the latent representation at layer $\ell$, for $\ell=1,\dots,L$,
with $\mathbf{h}^{(0)} = \mathbf{x} \in \mathbb{R}^{D_x}$. The
observed output $\mathbf{y} \in \mathbb{R}^{D_y}$ is modeled as a
noisy observation of the deepest latent representation,
$\mathbf{y} = \mathbf{h}^{(L)} + \boldsymbol{\epsilon}_{y}$ with
$\boldsymbol{\epsilon}_{y} \sim \mathcal{N}(\mathbf{0}, \sigma_y^2
\mathbf{I})$. The latent representations are defined recursively by
$\mathbf{h}^{(\ell)} =
f^{(\ell)}(\mathbf{h}^{(\ell-1)}) + \boldsymbol{\epsilon}^{(\ell)}$
for $\ell=1,\dots,L$, where
$\boldsymbol{\epsilon}^{(\ell)} \sim \mathcal{N}(\mathbf{0},
\sigma_\ell^2 \mathbf{I})$ is the additive Gaussian noise at layer
$\ell$, and $f^{(\ell)} : \mathbb{R}^{D_{\ell-1}} \to
\mathbb{R}^{D_\ell}$ is a vector-valued GP mapping. Although this
hierarchy increases expressive power, training becomes significantly
harder as depth increases \citep{duvenaud2014deep, dunlop2018howdeep}.

\vspace{-0.05in}
\section{From Discrete-Depth Pathology to the Residual
Bridge}\label{from-discrete-depth-pathology-to-the-residual-bridge}
\vspace{-0.05in}
\label{sec:analysis}

\paragraph{Depth and Lipschitz Intuition.}\label{subsec:lipschitz_intuition}

Repeated composition can amplify parameter perturbations. To characterize
this possibility, we analyze the Lipschitz
constant of the standard DGP evidence lower bound (ELBO) under variational inference \citep{salimbeni2017doubly}. Since this objective is typically optimized using Monte Carlo estimates, we work with a single-sample estimator:
\vspace{-0.1in}
\begin{align}
\hat{\mathcal{L}}_{\mathrm{DGP}}(\bm{\theta})
    = \log p(\y \mid \f^{(L)})
    - \sum_{\ell=1}^{L} \mathrm{KL}\!\bigl[\cdot \,\Vert\, \cdot\bigr].
\end{align}
Here, the first term is the data-fit term and the second is a
Kullback--Leibler (KL) regularizer. We suppress the KL arguments because
their exact form depends on the chosen approximation. Moreover,
$\bm{\theta}$ denotes all variational parameters and kernel
hyperparameters, and $\f^{(L)}$\footnote{Throughout
  the paper, we use the same notation for Monte Carlo samples and their
  underlying random variables or function values whenever the
  distinction is clear from context.} is sampled from the variational
distribution over the final-layer latent function values, given by
\begin{align*}
\f^{(L)} = f^{(L)}\!\bigl(f^{(L-1)}(\cdots f^{(1)}(\x)\cdots )\bigr).
\end{align*}

Let $L_{\mathrm{DGP}}$ denote the Lipschitz constant of
$\hat{\mathcal L}_{\mathrm{DGP}}$ on the parameter domain under consideration,
with Monte Carlo randomness held fixed. The following proposition gives
an assumption-dependent upper bound on this sensitivity.

\begin{proposition}\label{prop:lipschitz_explosion}
Consider an $L$-layer DGP whose sampled layer map at layer $\ell$ is
$\lambda_\ell$-Lipschitz with respect to its input. Assume in
addition that the output-gradient term and the layerwise
parameter-Jacobian term are uniformly bounded, and that the KL
regularizer decomposes layerwise with uniformly bounded block
gradients, as stated precisely in
Appendix~\ref{app:optimization_stability}. Then the Lipschitz constant
of the single-sample DGP objective satisfies
\vspace{-0.1in}
\begin{align*}
L_{\mathrm{DGP}}
\le
C\left(\sum_{k=1}^{L}\prod_{\ell=k+1}^{L}\lambda_\ell^2
\right)^{1/2}
+
O(\sqrt{L}),
\end{align*}
where $C$ is independent of $L$, and the $O(\sqrt{L})$ term
captures the contribution of the KL regularizer under that layerwise
decomposition. The proof is given in
Appendix~\ref{app:optimization_stability}.
\end{proposition}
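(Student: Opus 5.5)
The plan is to bound the Lipschitz constant by a uniform bound on the gradient norm $\|\nabla_{\bm\theta}\hat{\mathcal L}_{\mathrm{DGP}}\|$ over the (convex) parameter domain. The Monte Carlo randomness is held fixed, and the mean value theorem then gives $L_{\mathrm{DGP}}\le\sup_{\bm\theta}\|\nabla_{\bm\theta}\hat{\mathcal L}_{\mathrm{DGP}}\|$. Partition $\bm\theta=(\bm\theta_1,\dots,\bm\theta_L)$ into layerwise blocks, where $\bm\theta_k$ collects the variational parameters and kernel hyperparameters of layer $k$. Since the blocks are disjoint, $\|\nabla_{\bm\theta}\hat{\mathcal L}\|^2=\sum_{k}\|\nabla_{\bm\theta_k}\hat{\mathcal L}\|^2$. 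The triangle inequality then splits the gradient into the data-fit part and the KL part, and I bound each separately.

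For the data-fit term, write $\h^{(\ell)}=f^{(\ell)}(\h^{(\ell-1)};\bm\theta_\ell)$ for the sampled layer maps with fixed noise. The chain rule gives
\begin{align*}
\nabla_{\bm\theta_k}\log p(\y\mid\f^{(L)})
=\Bigl(\tfrac{\partial \h^{(k)}}{\partial\bm\theta_k}\Bigr)^{\!\top}
\Bigl(\prod_{\ell=k+1}^{L}\tfrac{\partial \h^{(\ell)}}{\partial \h^{(\ell-1)}}\Bigr)^{\!\top}
\nabla_{\f^{(L)}}\log p(\y\mid\f^{(L)}).
\end{align*}
The $\lambda_\ell$-Lipschitz property of the sampled layer map gives $\|\partial\h^{(\ell)}/\partial\h^{(\ell-1)}\|_{\mathrm{op}}\le\lambda_\ell$ wherever the map is differentiable; almost everywhere suffices, since a Rademacher-type argument along line segments handles the rest. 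Combine this with the assumed uniform bounds $\|\nabla_{\f^{(L)}}\log p\|\le G$ and $\|\partial\h^{(k)}/\partial\bm\theta_k\|_{\mathrm{op}}\le J$. Submultiplicativity then yields $\|\nabla_{\bm\theta_k}\log p\|\le GJ\prod_{\ell=k+1}^L\lambda_\ell$. Squaring and summing over $k$ gives the first term, with $C=GJ$ independent of $L$.

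For the KL part, the assumption that the regularizer decomposes layerwise means $\sum_\ell\mathrm{KL}_\ell$ with $\mathrm{KL}_\ell$ depending only on $\bm\theta_\ell$. The assumption of uniformly bounded block gradients gives $\|\nabla_{\bm\theta_k}\mathrm{KL}_k\|\le B$. Hence the KL gradient has norm at most $(\sum_k B^2)^{1/2}=B\sqrt L$, which is the $O(\sqrt L)$ term. The two pieces are then added.

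The main obstacle is the Jacobian-product step: justifying that the state Jacobian of a sampled layer is bounded by $\lambda_\ell$ uniformly along the segment used in the mean value theorem. This requires that the Lipschitz constants $\lambda_\ell$ hold uniformly over the parameter domain and not just at one $\bm\theta$. A second subtlety is that in sparse variational DGPs the layer-$k$ kernel hyperparameters may also enter the KL of that layer only, which keeps the decomposition clean. If the layers share hyperparameters, one would instead regroup the shared block and absorb it into $C$. I would state these as the precise appendix assumptions rather than derive them.
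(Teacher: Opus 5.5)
Your proposal is correct and follows the paper's proof essentially step for step. Both bound $L_{\mathrm{DGP}}$ by the supremum of the gradient norm, split the data-fit and KL parts by the triangle inequality, handle the data term via the layerwise block partition and chain rule with submultiplicative Jacobian bounds squared and summed over blocks, and bound the layerwise KL by $\sqrt{L}$ times the block-gradient bound.

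One caveat concerns only your side remark, which lies outside the proposition's layerwise hypotheses. A hyperparameter block shared across layers would contribute $\sum_k\prod_{\ell>k}\lambda_\ell$, an $\ell^1$ rather than $\ell^2$ sum. That can cost an extra $\sqrt{L}$ factor (e.g.\ when all $\lambda_\ell=1$), so it cannot simply be absorbed into an $L$-independent $C$.
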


Proposition~\ref{prop:lipschitz_explosion} shows that the depth
dependence of $L_{\mathrm{DGP}}$ is driven mainly by the data-fit
term, rather than the KL regularization, and may grow exponentially with
depth. Even at the level of a single-sample pathwise estimator, this
implies increasing sensitivity to parameter perturbations as depth
grows, making standard DGP optimization progressively more
ill-conditioned, with estimator variance potentially worsening the
effect.
Repeated discrete composition allows small perturbations to be
multiplied from one layer to the next. That is the pathology we want to
repair before introducing a continuous-depth model.

\paragraph{Residual DGP as a First Repair.}\label{subsec:continuous_limit}

This motivates a residual formulation of DGPs
\citep{wyrwalresidual}, in which each layer updates the current latent
representation additively rather than fully remapping it:
\begin{align*}
\mathbf{h}^{(\ell)} = \mathbf{h}^{(\ell-1)}
+ f^{(\ell)}(\mathbf{h}^{(\ell-1)})
+ \boldsymbol{\epsilon}^{(\ell)},
\qquad \ell = 1,\dots,L.
\end{align*}
Because the identity path is retained, each residual function only needs
to model a correction, reducing perturbation amplification at each
layer. However, the model remains a finite discrete composition, so its
sensitivity can still be influenced by repeated Jacobian products and
may accumulate with depth.
Residual DGP is therefore the right first repair, but it still treats
depth as a finite stack of separately parameterized residual blocks.
This motivates the continuous-depth redesign introduced in
Section~\ref{sec:models}.

\section{Continuous-Depth Modeling and
Inference}\label{continuous-depth-modeling-and-inference}

\vspace{-0.05in}
\label{sec:models}
\vspace{-0.05in}

If depth is meant to express gradual representation evolution, then a
finite collection of separately parameterized residual layers is only
one way to represent that evolution. We therefore replace the discrete
layer index by a depth variable $t \in [0,T]$, so the latent
representation evolves gradually rather than through a fixed stack of
residual blocks.

The resulting model is a redesign motivated by the residual view, not a
claimed scaling limit of a discrete DGP. Its advantage is that it
separates the latent dynamics from the choice of parameterization
across depth. Once this shared continuous-depth system is fixed, the
main modeling question becomes what prior should be placed on its
depth-indexed drift family.

Concretely, the latent
representation evolves over depth $t \in [0,T]$ according to the
It\^o SDE
\begin{equation}
    d\mathbf{h}_t
    = f_t(\mathbf{h}_t)\,dt
    + \sigma_t\, d\mathbf{w}_t,
    \label{eq:gp_sde}
\end{equation}
where $\mathbf{w}_t$ is a Wiener process and $\sigma_t\,d\mathbf{w}_t$
is the continuous counterpart of the layerwise noise
$\boldsymbol{\epsilon}^{(\ell)}$.\footnote{If discretized by Euler--Maruyama
with unit step size, Eq.~\eqref{eq:gp_sde} has the update form of a residual
DGP layer, but uses a shared depth-indexed drift prior rather than
independent layer GPs.} The central modeling object is
therefore the depth-indexed drift family
$\{f_t\}_{t \in [0,T]}$, with each $f_t$ acting as a vector field on
the current latent state. This formulation cleanly separates the latent
dynamics from the way dependence across depth is parameterized.

The paper studies two GP-based priors on this same drift family. CDGP
places a direct GP prior on the joint state-depth drift, while FlowDGP
generates the family by placing a GP prior on a base drift and evolving
it across depth. The common latent SDE is the same in both cases; only
the prior on $\{f_t\}$ changes.

\vspace{-0.05in}
\subsection{CDGP: Separable Kernel
Instantiation}\label{cdgp-separable-kernel-instantiation}
\label{subsec:cdgp}
\vspace{-0.05in}

The continuous-depth model in Section~\ref{sec:models} leaves open the
prior on the drift family $\{f_t\}$. The most direct answer is to
write that family as a single function of state and depth,
$f_t(\mathbf{h}) \equiv f(\mathbf{h},t)$, and place a GP prior on
this joint state--depth drift with a separable kernel:
\begin{align*}
k\big((\mathbf{h},t),(\mathbf{h}',t')\big)
= k_h(\mathbf{h},\mathbf{h}')\,k_t(t,t').
\end{align*}
We term this first prior the continuous-depth DGP (CDGP). CDGP is the
most direct implementation of the continuous-depth idea: place a GP
prior on the drift family itself and infer that object with standard
sparse-GP machinery. The separable structure enables a Kronecker
factorization of the inducing-point covariance, making inference
scalable. However, it restricts how the drift can vary jointly with
state and depth: the covariance between drift values at different
depths factorizes into a product of state-only and depth-only terms,
preventing the kernel from capturing interactions in which the spatial
structure of the drift changes with depth. This motivates a second,
more structured prior on the same family $\{f_t\}$.

\vspace{-0.05in}
\subsection{FlowDGP: CNF-Transformed
GP}\label{flowdgp-cnf-transformed-gp}
\label{subsec:flowdgp}
\vspace{-0.05in}

CDGP models the shared drift family directly, but its separable kernel
is deliberately restrictive. We therefore consider a second prior on
the same family $\{f_t\}$, one that enriches the depth dependence
without incurring the cost of a non-separable joint kernel. We draw a base drift
$f_0$ from a GP prior and evolve it across depth via a continuous
normalizing flow (CNF). A shared vector field $g_{\bm{\phi}}$ defines
an ODE whose solution at depth $t$ gives the evolved drift $f_t$.
Here $g_{\bm{\phi}}$ acts on a finite-dimensional representation of
the drift itself, not on the latent state $\h_t$; in practice it is
applied pointwise to sampled drift values such as $f_0(\x_n)$, and we
parameterize it as a small multilayer perceptron (MLP) with parameters
$\bm{\phi}$. For regression with latent state dimension $D_y$, each
such pointwise drift value lies in
$\mathbb R^{D_y}$, so $g_{\bm{\phi}}:\mathbb R^{D_y}\to\mathbb
R^{D_y}$. Concretely,
\begin{equation}
  \frac{\mathrm{d} f_t}{\mathrm{d} t}
    = g_{\bm{\phi}}(f_t), \qquad
  f_0 \sim \mathcal{GP}(0, k_0).
  \label{eq:flow_ode}
\end{equation}
Here $f_0$ is learned through its inducing variables rather than fixed
as a function.
The displayed equation is a function-level idealization; in inference we
apply the same flow pointwise to sampled drift values along each
propagated data trajectory.

FlowDGP should therefore be understood as a second prior on the same
latent SDE in Eq.~\eqref{eq:gp_sde}, not as a separate model family.
The contrast is simple: CDGP parameterizes depth dependence directly in
the kernel, whereas FlowDGP parameterizes it by evolving a
base drift through the neural flow $g_{\bm{\phi}}$. The evolution is
deterministic given $f_0$, keeping inference on a single base GP.
This avoids both the large joint kernel matrix of a non-separable kernel
and the additional path inference required by a diffusion over $f_t$.
The choice is motivated by tractability, rather than optimality among
flow or diffusion architectures. Depth dependence is
instead captured by the ODE integration of $g_{\bm{\phi}}$, which
adds solver cost but, as we show in Section~\ref{subsec:benchmark},
yields substantially faster training than CDGP. This construction is also
related in spirit to differential GP flows \citep{hegde2019deep}, but
here it is used to define a prior over the depth-indexed drift that sits
inside the same continuous-depth regression framework as CDGP.

\vspace{-0.05in}
\subsection{Variational Inference}
\label{subsec:vi_new}
\vspace{-0.05in}

CDGP and FlowDGP are two priors on the same continuous-depth latent
system. Their shared inference structure is most transparent at the
level of the common simulator rather than model-specific algebra.
For each input $\x_n$, both models produce a terminal latent state
$\h_T^{(n)}$ and then apply the same readout likelihood:
\begin{equation}
\y_n \sim p(\y_n \mid \h_T^{(n)}),
\qquad
\h_T^{(n)} = \Phi(\x_n, \vf, \xi_n; \bm{\theta}),
\end{equation}
where $\vf$ denotes the latent GP object, $\xi_n$ collects the Gaussian exogenous noise used during path simulation, and $\bm{\theta}$ collects the remaining model, kernel, and variational parameters. Here $\Phi(\x_n, \vf, \xi_n; \bm{\theta})$ denotes the deterministic map that propagates the latent state from depth $0$ to depth $T$ once $\vf$ and $\xi_n$ are given.

The two models differ only in what object carries the GP prior. In CDGP, the latent GP object is the joint state--depth drift $\vf = f(\h,t)$, so inference is performed directly on the drift defined over the joint state--depth domain. In FlowDGP, the latent GP object is
the base drift $\vf = f_0$; the full depth-indexed family $(f_t)$ is
then obtained deterministically by evolving $f_0$ through the flow ODE
in Eq.~\eqref{eq:flow_ode}. This is why FlowDGP places inducing points
only on the base GP, whereas CDGP places them on the joint
state--depth input.

Both models use the same variational recipe. Introduce inducing variables $\vu$ for the latent GP object $\vf$, place a Gaussian variational distribution
\begin{equation}
q(\vu) = \mathcal{N}(\mathbf{m}, \mathbf{S}),
\end{equation}
and keep the GP conditional exact, so that the full approximation takes the form
\begin{equation}
q(\vf, \vu, \xi) = q(\vu)\,p(\vf \mid \vu)\,p(\xi).
\end{equation}
All model-specific structure remains inside the simulator $\Phi$: once $\vf$ is sampled from its sparse variational posterior and $\xi_n$ is sampled from its Gaussian law, the latent path and the
terminal state are obtained by deterministic propagation.
The evidence lower bound is
\begin{equation}
\mathcal{L}
=
\sum_{n=1}^{N}
\mathbb{E}_{q(\vu)\,p(\vf \mid \vu)\,p(\xi_n)}
\Big[
\log p\bigl(\y_n \mid \Phi(\x_n, \vf, \xi_n; \bm{\theta})\bigr)
\Big]
-
\mathrm{KL}\!\bigl[q(\vu)\,\Vert\,p(\vu)\bigr].
\label{eq:generic_gp_sim_elbo}
\end{equation}
This is the common inference structure: inducing-point variational
inference for GP-driven continuous-depth simulators.
In practice, $\Phi$ is implemented on the discretized depth grid
$0=\tau_0<\tau_1<\cdots<\tau_J=T$ using a left-point
Euler--Maruyama scheme. Writing
$\Delta\tau_j=\tau_j-\tau_{j-1}$ and
$\Delta\w_j=\w_{\tau_j}-\w_{\tau_{j-1}}$, one simulated path is
generated by first sampling the relevant GP object and then updating
\begin{equation}
\h_{\tau_j}^{(n)}
=
\h_{\tau_{j-1}}^{(n)}
+
f_{\tau_{j-1}}^{(n)}\!\bigl(\h_{\tau_{j-1}}^{(n)}\bigr)\,\Delta\tau_j
+
\sigma_{\tau_j}\Delta\w_j,
\qquad j=1,\ldots,J,
\label{eq:em_generic}
\end{equation}
starting from $\h_0^{(n)}=\x_n$. In CDGP, the drift value
$f_{\tau_{j-1}}^{(n)}(\h_{\tau_{j-1}}^{(n)})$ is read directly from
the sampled joint GP $f(\h,t)$; in FlowDGP, one first samples the
base GP $f_0$, transports it through the flow ODE to obtain
$(f_t)$, and then uses the transported drift in the same update.

With $S$ Monte Carlo samples, the optimized discretized objective is
\begin{equation}
\widehat{\mathcal{L}}
=
\sum_{n=1}^N
\frac{1}{S}\sum_{s=1}^S
\log p\!\bigl(\y_n \mid \h_T^{(n,s)}\bigr)
-
\mathrm{KL}\!\bigl[q(\vu)\,\Vert\,p(\vu)\bigr],
\label{eq:generic_gp_sim_mc}
\end{equation}
where each $\h_T^{(n,s)}$ is obtained by the corresponding simulated
path under Eq.~\eqref{eq:em_generic}.

For CDGP, we instantiate this construction with
$\vu = f(\widetilde{\mathbf{Z}})$ on the Cartesian product
$\widetilde{\mathbf{Z}} = \mathbf{Z}_h \times \mathbf{Z}_t$, where the
separable kernel gives the Kronecker prior
$\mathbf{K}_{uu} = \mathbf{K}_h(\mathbf{Z}_h,\mathbf{Z}_h)\otimes
\mathbf{K}_t(\mathbf{Z}_t,\mathbf{Z}_t)$. The simulator $\Phi$
propagates the latent state by discretizing the SDE in
Eq.~\eqref{eq:gp_sde} and evaluating the sampled drift
$f(\h_{\tau_{j-1}}^{(n)}, \tau_{j-1})$ along the path.
For FlowDGP, we instantiate the same construction with
$\u = f_0(\mathbf{Z})$ for the base GP alone. After sampling the base
drift from its sparse variational posterior, $\Phi$ first transports
it across depth through the ODE in Eq.~\eqref{eq:flow_ode} to obtain the
family $(f_t)$, and then propagates the latent state with those
transported drift values. CDGP and FlowDGP therefore share the same
likelihood, the same simulator-level variational logic, and the same
Monte Carlo training principle; they differ only in the latent GP object
being approximated and in the domain on which inducing variables are
placed.

\vspace{-0.05in}
\subsection{Optimization Stability}
\label{subsec:stability}
\vspace{-0.05in}

We now ask what kind of sensitivity statement survives once the deep
stack is replaced by a latent SDE. The relevant quantity is the pathwise
sensitivity of the SDE solution to parameter perturbations. This
analysis clarifies what replaces the repeated Jacobian products of a
discrete stack, but it is not itself a training guarantee for CDGP or
FlowDGP.
\begin{theorem}\label{thm:pathwise_stability}
Consider the SDE $ d\mathbf{h}_t = f(\mathbf{h}_t, t; \bm{\theta})\,dt + \sigma_t\,d\mathbf{w}_t.$ Let $\mathbf{h}_t^{(1)}$ and $\mathbf{h}_t^{(2)}$
denote the strong solutions associated with parameters $\bm{\theta}_1$
and $\bm{\theta}_2$, respectively, driven by the same Brownian motion.
Assume that the diffusion coefficient $\sigma_t$ is shared across the
two systems, and that $f$ is Lipschitz continuous in the state
variable with constant $L_h$, and in the parameter variable with
constant $L_\theta$. Then, for any $t \in [0,T]$, \begin{equation}
    \big\|\mathbf{h}_t^{(1)} - \mathbf{h}_t^{(2)}\big\|
    \le
    L_\theta\,\phi(L_h,t)
    \big\|\bm{\theta}_1 - \bm{\theta}_2\big\|,
\end{equation} where $\phi(L_h,t) = (e^{L_h t}-1)/L_h$ for $L_h>0$,
with continuous extension $\phi(0,t)=t$, for almost every sample path.
See Appendix~\ref{app:proof_thm_3.1} for the proof.
\end{theorem}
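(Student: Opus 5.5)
The plan is to use the fact that the two SDEs share the same additive noise, so the difference process satisfies a random ODE with no stochastic integral. Then a deterministic Grönwall argument applies pathwise.

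Fix a sample path of the common Brownian motion $\mathbf{w}$ on which both strong solutions exist and satisfy their integral equations. This holds almost surely. Start both solutions from the same initial state $\mathbf{h}_0^{(1)}=\mathbf{h}_0^{(2)}$, namely the input $\x$; this is implicit in the statement. For every $t$ we then have
\begin{equation*}
\mathbf{h}_t^{(i)} = \mathbf{h}_0 + \int_0^t f(\mathbf{h}_s^{(i)},s;\bm{\theta}_i)\,ds + \int_0^t \sigma_s\,d\mathbf{w}_s, \qquad i=1,2 .
\end{equation*}
The stochastic integrals are identical because $\sigma_t$ is shared and does not depend on the state or the parameters. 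Subtracting therefore cancels them exactly. The difference $\mathbf{e}_t=\mathbf{h}_t^{(1)}-\mathbf{h}_t^{(2)}$ satisfies
\begin{equation*}
\mathbf{e}_t=\int_0^t\bigl[f(\mathbf{h}_s^{(1)},s;\bm{\theta}_1)-f(\mathbf{h}_s^{(2)},s;\bm{\theta}_2)\bigr]ds .
\end{equation*}
Hence $t\mapsto\mathbf{e}_t$ is absolutely continuous on each such path.

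Next I split the integrand by adding and subtracting $f(\mathbf{h}_s^{(2)},s;\bm{\theta}_1)$. The state-Lipschitz bound controls one piece and the parameter-Lipschitz bound the other. Writing $u(t)=\|\mathbf{e}_t\|$ and $\delta=\|\bm{\theta}_1-\bm{\theta}_2\|$, this gives the integral inequality
\begin{equation*}
u(t)\le \int_0^t \bigl(L_h u(s) + L_\theta \delta\bigr)\,ds .
\end{equation*}

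Finally I apply Grönwall. Set $v(t)=\int_0^t u(s)\,ds$, so that $v'(t)=u(t)\le L_h v(t)+L_\theta\delta t$. Multiplying by $e^{-L_h t}$ and integrating gives $v(t)\le L_\theta\delta\int_0^t s\,e^{L_h(t-s)}ds$. Substituting back into the inequality for $u(t)$ yields
\begin{equation*}
u(t)\le L_\theta\delta\left(t+L_h\int_0^t s\,e^{L_h(t-s)}\,ds\right) = L_\theta\delta\,\frac{e^{L_h t}-1}{L_h}.
\end{equation*}
Equivalently, one can check directly that $\psi(t)=L_\theta\delta\,\phi(L_h,t)$ solves $\psi'=L_h\psi+L_\theta\delta$ with $\psi(0)=0$, and invoke the comparison principle for linear integral inequalities. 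The case $L_h=0$ gives $u(t)\le L_\theta\delta\,t$ directly, which matches the continuous extension $\phi(0,t)=t$.

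The only delicate step is the first one: justifying that the noise cancels pathwise. This needs the two solutions to be strong solutions driven by the same $\mathbf{w}$ and defined on a common full-measure set. It also needs $\sigma_t$ to be additive, meaning it does not depend on the state or on $\bm\theta$. Without that, an Itô-isometry argument in $L^2$ would be needed instead of a pathwise bound.

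I will also state explicitly that the Lipschitz constants are assumed uniform in $s\in[0,T]$, and in the state for the parameter bound. With that assumption, everything after the cancellation is a deterministic Grönwall argument.
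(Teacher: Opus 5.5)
Your proposal is correct and follows the paper's proof essentially step for step. Both arguments cancel the shared additive noise pathwise, add and subtract $f(\mathbf{h}_s^{(2)},s;\bm{\theta}_1)$, reduce to $u(t)\le\int_0^t \bigl(L_h u(s)+L_\theta\delta\bigr)\,ds$, and close with a linear Gr\"onwall argument. The only differences are cosmetic: you apply Gr\"onwall to $v(t)=\int_0^t u(s)\,ds$ rather than to the full right-hand side $y(t)=L_h v(t)+L_\theta\delta t$, and you state explicitly the equal initial condition and uniform Lipschitz constants that the paper uses tacitly.
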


Theorem~\ref{thm:pathwise_stability} is a generic pathwise sensitivity
statement. Unlike the discrete-depth proposition, it does not grow
through repeated products of layerwise Jacobians. It still carries an
exponential dependence on $T$, however, so a depth-uniform conclusion
requires extra structure.

\begin{corollary}\label{cor:contractive_stability}
Under the assumptions of
Theorem~\ref{thm:pathwise_stability}, suppose in
addition that the drift satisfies a one-sided contractivity condition on
$[0,T]$, i.e., there exists $\mu>0$ such that, for all
$\mathbf{h}_1,\mathbf{h}_2$ and $t\in[0,T]$, \begin{equation}
\bigl\langle
f(\mathbf{h}_1,t;\bm{\theta})-f(\mathbf{h}_2,t;\bm{\theta}),
\mathbf{h}_1-\mathbf{h}_2
\bigr\rangle
\le
-\mu \|\mathbf{h}_1-\mathbf{h}_2\|^2 .
\end{equation} Then, \begin{equation}
  \|\mathbf{h}_T^{(1)}-\mathbf{h}_T^{(2)}\|
  \le
  \frac{L_\theta}{\mu}\sqrt{1-e^{-\mu T}}\,
  \|\bm{\theta}_1-\bm{\theta}_2\|
  \le
  \frac{L_\theta}{\mu}\|\bm{\theta}_1-\bm{\theta}_2\|,
\end{equation} for almost every sample path. See
Appendix~\ref{app:proof_cor} for the proof.
\end{corollary}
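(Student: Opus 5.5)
The plan is to exploit the fact that the two strong solutions are driven by the same Brownian motion and share $\sigma_t$, so the noise cancels in their difference. Set $\mathbf{e}_t=\mathbf{h}_t^{(1)}-\mathbf{h}_t^{(2)}$ and $\delta=\|\bm{\theta}_1-\bm{\theta}_2\|$. Then, for almost every sample path,
\begin{equation*}
\mathbf{e}_t=\int_0^t\bigl[f(\mathbf{h}_s^{(1)},s;\bm{\theta}_1)-f(\mathbf{h}_s^{(2)},s;\bm{\theta}_2)\bigr]\,ds,
\qquad \mathbf{e}_0=\mathbf{0}.
\end{equation*}
Here $\mathbf{e}_0=\mathbf{0}$ because both systems start from the same input. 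Hence $t\mapsto\mathbf{e}_t$ is pathwise absolutely continuous, and it is in fact $C^1$ when $f$ is continuous in $t$. This is the same reduction to a random ODE that underlies Theorem~\ref{thm:pathwise_stability}. The difference now is that we use the one-sided contractivity condition in place of the two-sided Lipschitz bound $L_h$.

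First I would split the drift difference by adding and subtracting $f(\mathbf{h}_t^{(2)},t;\bm{\theta}_1)$. The first piece, $f(\mathbf{h}_t^{(1)},t;\bm{\theta}_1)-f(\mathbf{h}_t^{(2)},t;\bm{\theta}_1)$, involves a single parameter, so the contractivity hypothesis applies to it. The second piece, $f(\mathbf{h}_t^{(2)},t;\bm{\theta}_1)-f(\mathbf{h}_t^{(2)},t;\bm{\theta}_2)$, has norm at most $L_\theta\delta$ by the parameter-Lipschitz assumption.

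Rather than work with $\|\mathbf{e}_t\|$ directly, I would differentiate the squared norm. This sidesteps the non-differentiability of the norm at $\mathbf{e}_t=\mathbf{0}$, which matters here because the path starts exactly there. Using contractivity on the first piece and Cauchy--Schwarz on the second gives
\begin{equation*}
\frac{d}{dt}\|\mathbf{e}_t\|^2
=2\langle \mathbf{e}_t,\dot{\mathbf{e}}_t\rangle
\le -2\mu\|\mathbf{e}_t\|^2+2L_\theta\delta\|\mathbf{e}_t\|
\quad\text{a.e. in } t .
\end{equation*}
Next I would apply Young's inequality in the form $2L_\theta\delta\|\mathbf{e}_t\|\le \mu\|\mathbf{e}_t\|^2+L_\theta^2\delta^2/\mu$. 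This yields the linear differential inequality $u'\le-\mu u+L_\theta^2\delta^2/\mu$ for $u_t=\|\mathbf{e}_t\|^2$, with $u_0=0$.

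Multiplying by the integrating factor $e^{\mu t}$ and integrating (Grönwall in differential form, valid for absolutely continuous $u$) gives
\begin{equation*}
u_T\le \frac{L_\theta^2\delta^2}{\mu^2}\bigl(1-e^{-\mu T}\bigr).
\end{equation*}
Taking square roots produces exactly the first stated bound. The second inequality follows because $0\le 1-e^{-\mu T}\le 1$.

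The main obstacle is mostly bookkeeping. One point is to justify the pathwise ODE reduction and the absolute continuity of $u$ on a full-measure set, which uses existence of strong solutions and the shared noise. The other is to choose the Young splitting that reproduces the $\sqrt{1-e^{-\mu T}}$ factor. As a remark, a sharper bound $(L_\theta/\mu)(1-e^{-\mu T})\delta$ would follow from working with $\|\mathbf{e}_t\|$ itself, but that version requires handling the zero set of $\mathbf{e}_t$.
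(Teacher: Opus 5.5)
Your proposal is correct and follows the paper's proof essentially step for step. You cancel the shared noise to get a pathwise ODE for the difference, differentiate the squared norm, and add and subtract $f(\mathbf{h}_t^{(2)},t;\bm{\theta}_1)$. You then apply contractivity and Cauchy--Schwarz, use Young's inequality with parameter $\mu$ to reach $u'\le-\mu u+L_\theta^2\delta^2/\mu$, and conclude by Gr\"onwall with $u_0=0$. Your closing remark is also right: working with $\|\mathbf{e}_t\|$ directly yields the sharper factor $1-e^{-\mu T}\le\sqrt{1-e^{-\mu T}}$, which the paper does not pursue.
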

Under the contractive regime, the pathwise sensitivity is bounded by
$L_\theta/\mu$. If the loss gradient is bounded by $C$, the data-fit
term at terminal time satisfies, denoting by $L_{\mathrm{SDE}}$ the
Lipschitz constant of the terminal data-fit term with respect to
$\bm{\theta}$,
\begin{align}
L_{\mathrm{SDE}} \leq \frac{C\,L_\theta}{\mu}.
\end{align}
The qualitative difference from discrete DGPs is the object that
controls sensitivity. Proposition~\ref{prop:lipschitz_explosion} is
driven by repeated discrete composition, whereas the continuous-depth
formulation yields bounds in terms of a single drift field. The
contractive corollary is intentionally stronger and more specialized:
neither CDGP nor FlowDGP explicitly enforces that regime. We therefore
use these results to interpret the redesign, while
Section~\ref{subsec:toy_example} shows empirically that the proposed
continuous-depth models remain stable across the depth ranges we study.
Appendix~\ref{app:contraction-diagnostic} provides a controlled SDE
illustration of Corollary~\ref{cor:contractive_stability}.

\vspace{-0.05in}
\section{Experiments}
\vspace{-0.05in}

The experiments are organized as a sequence of evidence for the paper's
main claim. First, the toy study makes the discrete-depth pathology
visible and tests whether residualization is enough to repair it.
Second, the benchmark study asks whether the continuous-depth redesign
remains competitive on realistic regression tasks once residual DGP is
granted as the strongest discrete-depth baseline. Third, the large-scale
study asks which part of the continuous-depth family remains practical
when direct GP baselines become too expensive to train.

\vspace{-0.05in}
\subsection{Toy Example}
\vspace{-0.05in}

\label{subsec:toy_example}

\begin{figure}[t!]
    \centering
    \includegraphics[width=0.9\linewidth]{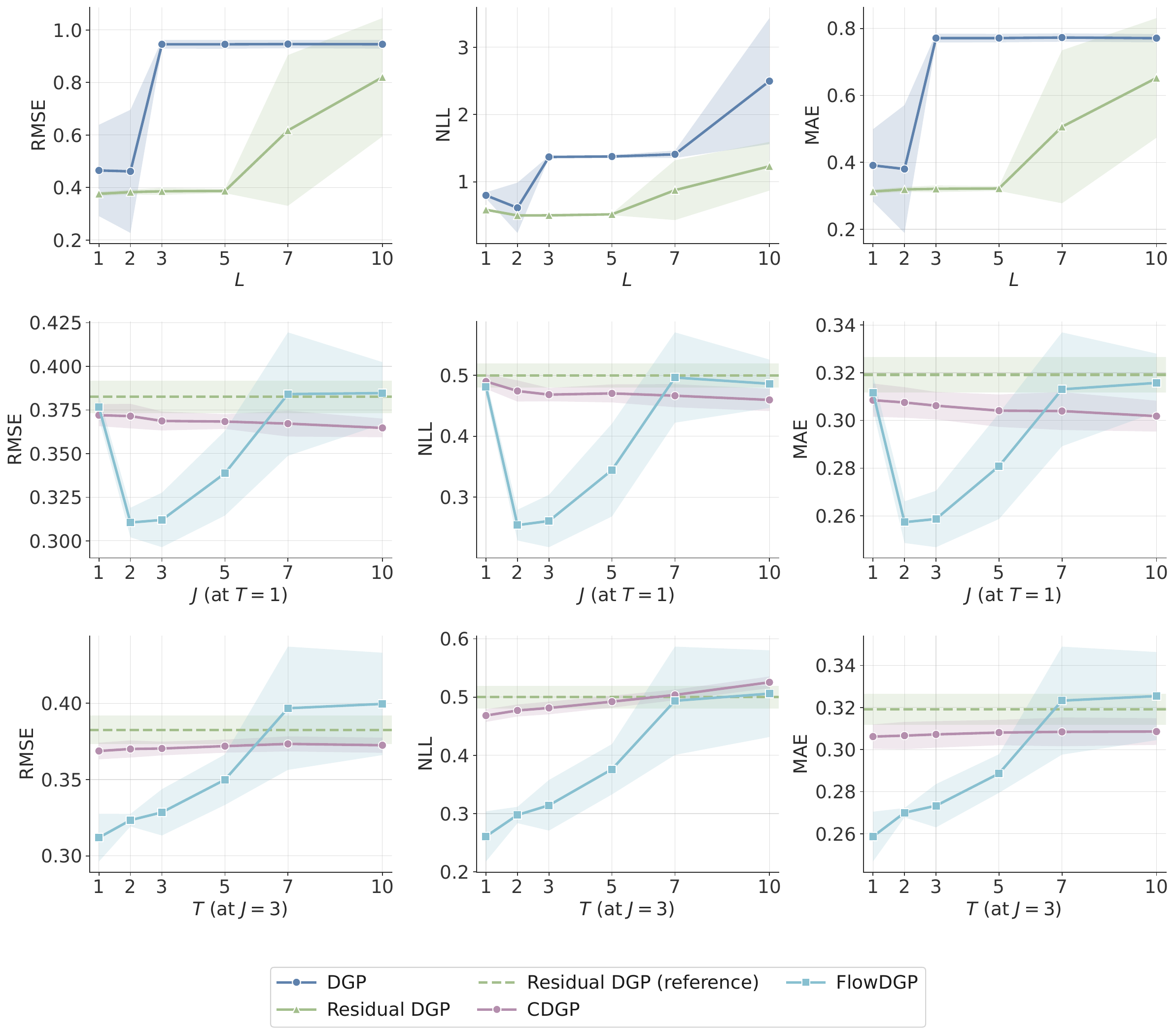}
    \vspace{-0.15in}
    \caption{\textbf{Toy regression results.} Top: test root mean squared error (RMSE), negative log-likelihood (NLL), and mean absolute error (MAE) versus depth $L$ for standard DGP and residual DGP. Middle: CDGP and FlowDGP versus $J$ at fixed $T=1$, with the dashed line showing the two-layer residual DGP reference. Bottom: CDGP and FlowDGP versus $T$ at fixed $J=3$. Shaded regions denote one standard deviation across random seeds.}
    \vspace{-0.2in}
    \label{fig:toy_example}
\end{figure}

We begin with a synthetic regression task designed to expose the depth
pathology directly. The point of this experiment is diagnostic: make the
failure of plain DGP visible, show that residualization is the right
first repair, and then ask whether the continuous-depth models remain
stable beyond it. The dataset comprises $800$
input--output pairs generated from a nonlinear function with additive Gaussian noise, partitioned into $300$ training and $500$ test samples. We report test negative log-likelihood (NLL), root mean squared error (RMSE), and mean absolute error (MAE), averaged over random seeds. Details of the dataset and hyperparameter settings are provided in Appendix~\ref{app:dataset} and Appendix~\ref{app:setting}, respectively.

Figure~\ref{fig:toy_example} should be read row by row. The top row
tests the failure mode of increasing discrete depth, the middle row asks
whether the continuous-depth models stay stable as the numerical
resolution $J$ changes, and the bottom row asks whether they remain
well behaved as the terminal depth $T$ increases.

The top row of Figure~\ref{fig:toy_example} shows DGP
and Residual DGP as the number of layers $L$ increases from $1$ to
$10$. Standard DGP deteriorates sharply across all three metrics at larger $L$,
consistent with the exponential Lipschitz growth identified in
Proposition~\ref{prop:lipschitz_explosion}. Residual DGP
is more robust due to its identity shortcut but still degrades at larger
depths, confirming that residualization helps materially but does not
remove the limitations of discrete composition.

The middle row fixes $T=1$ and varies the number of discretization
steps $J$ from $1$ to $10$. CDGP remains stable
across all metrics, matching or outperforming the two-layer
Residual DGP configuration (dashed line). FlowDGP achieves
lower NLL than CDGP at $J\le5$.

The bottom row fixes $J=3$ and varies $T$ from $1$ to $10$. CDGP
maintains stable performance as $T$ increases, and FlowDGP shows
better RMSE and MAE at $T$ values.
This behavior is qualitatively consistent with
the sensitivity perspective of
Section~\ref{subsec:stability}, but the empirical point stands on
its own: once depth is parameterized continuously, the models remain
well behaved across the range we test.

\vspace{-0.05in}
\subsection{Benchmark Datasets}
\label{subsec:benchmark}
\vspace{-0.05in}

\begin{table}[t]
\centering
\caption{Regression results on benchmark datasets. We report mean $\pm$ std over multiple random seeds. Bold indicates the best result; underline indicates the second best. Training time is in seconds. Rows highlighted in light blue correspond to our methods. Method abbreviations: DKL = deep kernel learning, TDGP = thin DGP, TGP = transformed GP, NODE = neural ODE, and NSDE = neural SDE.}
\label{tab:results}
\setlength{\tabcolsep}{5pt}
\renewcommand{\arraystretch}{1.12}
\resizebox{\textwidth}{!}{%
\begin{tabular}{llcccccc}
\toprule
& & \multicolumn{6}{c}{\textbf{Dataset}} \\
\cmidrule(lr){3-8}
\textbf{Metric} & \textbf{Method}
  & \texttt{heavy\_tailed}
  & \texttt{heteroscedastic}
  & \texttt{nonstationary}
  & \texttt{sarcos\_small}
  & \texttt{uci\_bike\_small}
  & \texttt{uci\_energy\_small} \\
\midrule
\rowcolor{ourblue}
\multirow{11}{*}{\rotatebox[origin=c]{90}{\textbf{RMSE $\downarrow$}}}
& FlowDGP            & $\underline{0.235 \pm 0.003}$ & $\mathbf{0.819 \pm 0.011}$    & $0.377 \pm 0.123$             & $\underline{0.323 \pm 0.028}$    & $\mathbf{0.934 \pm 0.121}$            & $0.804 \pm 0.009$ \\
\rowcolor{ourblue}
& CDGP               & $\mathbf{0.231 \pm 0.005}$    & $0.856 \pm 0.077$             & $0.634 \pm 0.008$             & $0.332 \pm 0.004$                & $1.122 \pm 0.012$                     & $\underline{0.802 \pm 0.001}$ \\
& Heteroskedastic GP & $0.306 \pm 0.002$             & $\underline{0.823 \pm 0.006}$ & $0.654 \pm 0.008$             & $0.345 \pm 0.010$                & $1.152 \pm 0.006$                     & $\mathbf{0.797 \pm 0.001}$ \\
& Residual DGP       & $0.311 \pm 0.005$             & $\underline{0.823 \pm 0.003}$ & $0.676 \pm 0.056$             & $0.347 \pm 0.006$                & $1.126 \pm 0.005$                     & $\underline{0.802 \pm 0.001}$ \\
& Sparse GP          & $0.309 \pm 0.001$             & $1.023 \pm 0.008$             & $0.756 \pm 0.006$             & $1.051 \pm 0.003$                & $1.162 \pm 0.004$                     & $\underline{0.802 \pm 0.002}$ \\
& TGP & $0.305 \pm 0.003$ & $0.893 \pm 0.006$ & $0.464 \pm 0.008$ & $0.383 \pm 0.028$ & $0.986 \pm 0.026$ & $0.827 \pm 0.004$ \\
& DGP                & $0.321 \pm 0.002$             & $1.012 \pm 0.002$             & $0.745 \pm 0.002$             & $1.050 \pm 0.001$                & $1.182 \pm 0.031$                     & $\underline{0.802 \pm 0.027}$ \\
& NODE               & $0.544 \pm 0.197$             & $0.897 \pm 0.036$             & $0.356 \pm 0.029$ & $0.351 \pm 0.010$                & $0.993 \pm 0.075$                     & $0.964 \pm 0.099$ \\
& TDGP               & $0.307 \pm 0.008$             & $0.894 \pm 0.078$             & $\underline{0.271 \pm 0.091}$ & $0.393 \pm 0.014$                & $1.037 \pm 0.034$                     & $\underline{0.802 \pm 0.001}$ \\
& DKL                & $0.437 \pm 0.029$             & $0.928 \pm 0.018$             & $0.548 \pm 0.065$             & $0.543 \pm 0.162$                & $\underline{0.956 \pm 0.175}$         & $1.593 \pm 0.161$ \\
& NSDE               & $0.480 \pm 0.043$             & $0.962 \pm 0.037$             & $\mathbf{0.211 \pm 0.010}$    & $\mathbf{0.302 \pm 0.007}$       & $1.052 \pm 0.033$                     & $1.317 \pm 0.146$ \\
\midrule
\rowcolor{ourblue}
\multirow{11}{*}{\rotatebox[origin=c]{90}{\textbf{NLL $\downarrow$}}}
& FlowDGP            & $\underline{0.329 \pm 0.022}$    & $\mathbf{1.248 \pm 0.008}$    & $\mathbf{0.411 \pm 0.286}$    & $1.352 \pm 0.692$                & $1.868 \pm 0.203$                     & $1.229 \pm 0.021$ \\
\rowcolor{ourblue}
& CDGP               & $0.355 \pm 0.037$ & $1.285 \pm 0.076$             & $0.979 \pm 0.006$ & $\mathbf{0.376 \pm 0.011}$       & $1.585 \pm 0.016$                     & $1.223 \pm 0.003$ \\
& Heteroskedastic GP & $0.490 \pm 0.015$             & $1.287 \pm 0.005$             & $1.013 \pm 0.012$             & $\underline{0.440 \pm 0.025}$    & $\underline{1.569 \pm 0.006}$            & $1.258 \pm 0.011$ \\
& Residual DGP       & $0.490 \pm 0.022$             & $1.254 \pm 0.004$             & $1.032 \pm 0.076$             & $\underline{0.440 \pm 0.017}$    & $1.605 \pm 0.010$                     & $1.232 \pm 0.004$ \\
& Sparse GP          & $0.501 \pm 0.005$             & $\underline{1.253 \pm 0.006}$ & $0.987 \pm 0.008$             & $1.470 \pm 0.001$                & $1.605 \pm 0.002$                     & $\underline{1.211 \pm 0.001}$ \\
& TGP & $\mathbf{0.258 \pm 0.007}$ & $1.326 \pm 0.008$ & $\underline{0.897 \pm 0.005}$ & $0.707 \pm 0.018$ & $\mathbf{1.519 \pm 0.053}$ & $1.257 \pm 0.007$ \\
& DGP                & $0.536 \pm 0.036$             & $1.442 \pm 0.002$             & $1.125 \pm 0.002$             & $1.470 \pm 0.001$                & $1.800 \pm 0.022$                     & $1.218 \pm 0.003$ \\
& NODE               & $0.870 \pm 0.427$             & $42.231 \pm 9.592$            & $30.059 \pm 3.559$            & $26.245 \pm 3.366$               & $52.447 \pm 23.808$                   & $21.634 \pm 11.721$ \\
& TDGP               & $0.357 \pm 0.051$             & $5.317 \pm 1.026$             & $1.080 \pm 1.323$             & $1.278 \pm 0.103$                & $5.585 \pm 0.573$                     & $\mathbf{1.204 \pm 0.002}$ \\
& DKL                & $25.536 \pm 2.319$            & $34.401 \pm 3.518$            & $6.608 \pm 0.271$             & $6.605 \pm 0.361$                & $4.604 \pm 1.652$                     & $9.590 \pm 1.391$ \\
& NSDE               & $0.828 \pm 0.259$             & $3.527 \pm 1.601$             & $2.058 \pm 0.462$             & $9.516 \pm 0.390$                & $1.584 \pm 0.049$         & $\mathbf{1.204 \pm 0.003}$ \\
\midrule
\rowcolor{ourblue}
\multirow{11}{*}{\rotatebox[origin=c]{90}{\textbf{Train Time}}}
& FlowDGP            & $27.490 \pm 0.940$            & $19.130 \pm 2.240$            & $27.690 \pm 10.150$           & $14.870 \pm 1.694$               & $12.887 \pm 0.231$                    & $31.910 \pm 7.909$ \\
\rowcolor{ourblue}
& CDGP               & $201.330 \pm 1.130$           & $189.180 \pm 4.420$           & $126.930 \pm 1.670$           & $202.643 \pm 12.019$             & $93.149 \pm 9.607$                    & $178.598 \pm 23.021$ \\
& Heteroskedastic GP & $21.870 \pm 0.600$            & $26.060 \pm 1.910$            & $22.600 \pm 0.730$            & $23.375 \pm 5.970$               & $26.174 \pm 8.227$                    & $31.023 \pm 7.772$ \\
& Residual DGP       & $47.880 \pm 4.790$            & $74.580 \pm 8.340$            & $67.960 \pm 2.340$            & $48.065 \pm 2.814$               & $51.406 \pm 6.523$                    & $117.847 \pm 16.674$ \\
& Sparse GP          & $11.810 \pm 0.160$            & $11.300 \pm 1.280$            & $11.510 \pm 0.270$            & $10.052 \pm 1.869$               & $9.191 \pm 0.155$                     & $9.236 \pm 0.264$ \\
& TGP & $129.878 \pm 8.237$ & $129.303 \pm 7.154$ & $137.329 \pm 5.959$ & $134.859 \pm 8.759$ & $140.042 \pm 9.254$ & $130.997 \pm 9.929$ \\
& DGP                & $71.360 \pm 4.350$            & $124.420 \pm 6.030$           & $119.190 \pm 2.410$           & $70.540 \pm 5.532$               & $71.327 \pm 8.291$                    & $128.722 \pm 11.922$ \\
& NODE               & $13.470 \pm 0.230$            & $10.590 \pm 0.480$            & $3.840 \pm 0.050$             & $4.822 \pm 0.283$                & $6.927 \pm 2.081$                     & $5.573 \pm 2.423$ \\
& TDGP               & $52.386 \pm 6.236$            & $37.278 \pm 15.171$           & $34.389 \pm 24.178$           & $32.167 \pm 21.912$              & $26.451 \pm 2.199 $                   & $72.789 \pm 12.371$   \\
& DKL                & $6.570 \pm 0.460$             & $6.880 \pm 1.000$             & $8.990 \pm 0.540$             & $8.854 \pm 0.456$                & $4.051 \pm 0.223$                     & $3.995 \pm 0.044$ \\
& NSDE               & $74.350 \pm 9.580$            & $67.580 \pm 9.380$            & $69.420 \pm 6.630$            & $73.253 \pm 2.473$               & $63.443 \pm 8.320$                    & $79.741 \pm 10.047$ \\
\bottomrule
\end{tabular}%
}
\vspace{-0.2in}
\end{table}

The benchmark study asks a different question from the toy experiment:
once residual DGP is granted as the strongest discrete-depth repair, are
the proposed continuous-depth models still competitive on realistic
regression tasks? We therefore evaluate all methods on six regression
datasets spanning diverse data characteristics: \texttt{heavy\_tailed},
\texttt{heteroscedastic}, and \texttt{nonstationary} are synthetic
datasets designed to test specific modeling challenges, while
\texttt{sarcos\_small}, \texttt{uci\_bike\_small}, and
\texttt{uci\_energy\_small} are subsampled versions of real-world
datasets (see Appendix~\ref{app:dataset} for details). We report test
NLL, RMSE, and training time averaged over random seeds. The
benchmark baselines are sparse GP, heteroscedastic GP
\citep{le2005heteroscedastic}, standard DGP, residual DGP
\citep{wyrwalresidual}, deep kernel learning (DKL)
\citep{wilson2016deep}, thin DGP (TDGP) \citep{de2023thin}, neural ODE
(NODE) \citep{chen2018neural}, neural SDE (NSDE)
\citep{li2020scalable}, and transformed GP (TGP)
\citep{maronas2021transforming}.

Table~\ref{tab:results} summarizes the results. Three patterns matter
most. First, the continuous-depth models are not only stable in the toy
setting; they are genuinely competitive on realistic regression tasks.
FlowDGP achieves the best NLL on \texttt{heteroscedastic} and
\texttt{nonstationary}, while CDGP achieves the best NLL on
\texttt{sarcos\_small}. TGP has the best NLL on \texttt{heavy\_tailed}
and \texttt{uci\_bike\_small}; CDGP or FlowDGP has lower RMSE than TGP
on all six datasets. Second, the comparison to
discrete-depth baselines matches the motivating story of the paper:
standard DGP performs poorly across most datasets, especially in NLL,
and residual DGP improves over DGP but still trails CDGP and FlowDGP.
Third, the two continuous-depth variants occupy different practical
regimes. FlowDGP is substantially faster than CDGP, with roughly
$5$--$14\times$ lower training time across datasets, while CDGP
remains the strongest direct GP instantiation of the framework.

The benchmark comparison also includes NODE and NSDE, which share
continuous-depth dynamics but parameterize the vector field with a neural
network rather than a GP-based prior. They play different roles. NODE is the clean
deterministic baseline for continuous depth without a GP-based prior,
but its NLL is often dramatically worse, especially on
\texttt{heteroscedastic}, \texttt{nonstationary}, and
\texttt{sarcos\_small}. NSDE is the closer stochastic rival: it achieves
very strong RMSE on \texttt{nonstationary} and \texttt{sarcos\_small},
but its NLL remains markedly weaker. The benchmark results show more than the usefulness of
continuous dynamics alone. In this small- to medium-scale regime, the
GP-based prior over the drift family materially improves probabilistic
performance relative to the neural continuous-depth
alternatives. Together with the toy study, this closes the main
comparison against both residual DGP and the non-GP continuous-depth
baselines. The next subsection isolates the larger-scale regime in
which FlowDGP, rather than CDGP, becomes the practical representative
of the continuous-depth approach.

\vspace{-0.05in}
\subsection{Large-Scale Datasets}
\vspace{-0.05in}
\label{subsec:large-scale}

\begin{figure}[t!]
    \centering
    \includegraphics[width=0.9\linewidth]{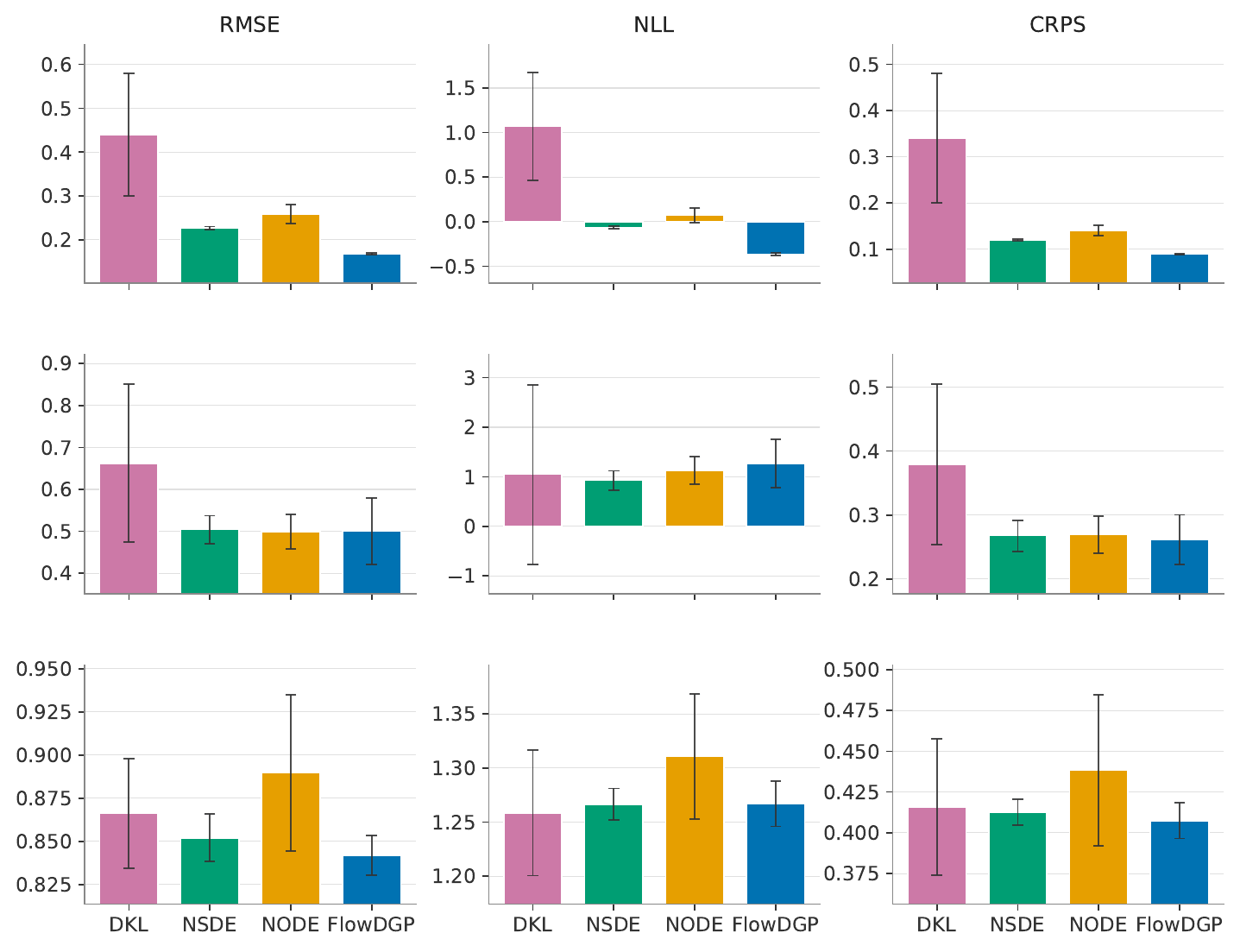}
    \vspace{-0.15in}
    \caption{Results on full-size \texttt{uci\_bike} (top), \texttt{uci\_energy} (middle), and \texttt{sarcos} (bottom) datasets comparing FlowDGP, deep kernel learning (DKL), neural ordinary differential equations (NODE), and neural stochastic differential equations (NSDE). We report root mean squared error (RMSE), negative log-likelihood (NLL), and continuous ranked probability score (CRPS) averaged over random seeds; error bars denote one standard deviation. For readability, the NLL values of DKL on \texttt{sarcos} are scaled by $0.1$ (first) and $1/3$ (second) to avoid distorting the $y$-axis scale.}
    \label{fig:large_scale}
    \vspace{-0.2in}
\end{figure}

The large-scale study isolates the practical regime of FlowDGP. We
compare against methods with neural-network components that scale
naturally to larger datasets: DKL
\citep{wilson2016deep}, neural ODE
\citep{chen2018neural}, neural SDE
\citep{li2020scalable}, and FlowDGP.
The evaluation uses the full-size versions of \texttt{uci\_bike},
\texttt{uci\_energy}, and \texttt{sarcos}
(Appendix~\ref{app:dataset}). CDGP and residual DGP are
omitted because their training cost is prohibitive at these scales; the
benchmark timings already show this trend, and
Appendix~\ref{app:complexity} explains why. We report
RMSE, NLL, and continuous ranked probability score (CRPS), averaged over
random seeds. The point is not that CDGP disappears from the
story, but that the two continuous-depth variants occupy different
cost-quality regimes: CDGP is the direct GP instantiation, whereas
FlowDGP is the scalable one. In this regime, the most meaningful
comparison is between FlowDGP and NSDE: both are stochastic
continuous-depth models, but only FlowDGP retains a GP-based prior over
the drift family.

Figure~\ref{fig:large_scale} presents the results. On
\texttt{uci\_bike}, FlowDGP achieves the best performance across RMSE,
NLL, and CRPS. On \texttt{uci\_energy}, it obtains the best CRPS and
is tied for the best RMSE, while NSDE has the best NLL. On \texttt{sarcos}, it performs
comparably to, or slightly better than, neural SDE across all three
metrics, while neural ODE and DKL provide the weakest probabilistic
performance.

Overall, the large-scale picture is sharper than a generic baseline
comparison. NODE remains a useful deterministic continuous-depth
reference, but it is not the main rival in probabilistic regression.
NSDE is the closer stochastic comparator, and the figure shows that
FlowDGP remains competitive with it in NLL and CRPS
across the datasets studied here. The practical point is
therefore not just that FlowDGP is competitive in point prediction, but
that a GP-based prior over the drift family remains useful even in the
scalable continuous-depth regime.\footnote{Additional results on
YearPredictionMSD are provided in Appendix~\ref{app:yearprediction-results}.}

\section{Related Work}
\label{sec:related}

DGPs compose GP mappings \citep{damianou2013deep}, with scalable
variational inference \citep{salimbeni2017doubly}. Residual
\citep{wyrwalresidual} and thin DGPs \citep{de2023thin} modify this
hierarchy. Deep kernel learning \citep{wilson2016deep} instead learns
neural features, while TGPs \citep{maronas2021transforming} transform a
GP through an invertible, potentially input-dependent map.
Our focus is the parameterization of latent depth.

GP networks \citep{friedman2000networks} put GP priors on conditional
relations in a discrete graphical model; nonlinear multifidelity GPs
\citep{perdikaris2017nonlinear} learn relations between observed fidelity
levels. Our depth variable is latent and continuous, without observations
required at intermediate depths.

Neural ODEs and SDEs \citep{chen2018neural,li2020scalable} parameterize
continuous-depth dynamics with neural vector fields. Deep state-space
GPs \citep{zhao2021deep} use conditional SDE representations over the
input time variable within a finite hierarchy, enabling sequential inference.
Differential GP flows \citep{hegde2019deep} use a GP posterior to determine
the drift and diffusion of an input-warping SDE, followed by a separate
output GP. In FlowDGP, a sampled base drift is evolved across latent depth,
the state diffusion is specified separately, and the terminal state
enters the likelihood directly.

\vspace{-0.1in}
\section{Conclusion}
\vspace{-0.1in}

For probabilistic regression, depth in a deep GP is better treated as
latent continuous evolution than as a long discrete stack. Plain DGPs
become increasingly awkward as depth grows, and residual DGP is the
right first repair but not the final answer.

We therefore replaced the deep stack by a latent SDE with a
depth-indexed drift family and studied two GP-based priors on that
family: CDGP as a direct separable prior, and FlowDGP as a structured
flow-evolved prior, both trained with a shared sparse variational
framework. The theory
explains why this continuous-depth formulation lives in a different
sensitivity regime from repeated discrete composition, while
the strongest depth-uniform statement remains conditional.

Empirically, the pattern is consistent across scales. Residual DGP
improves over plain DGP, but the continuous-depth models are more stable
or more competitive in the regimes we study, with FlowDGP performing
best at larger scale. Continuous depth is a useful redesign of how
depth is represented in probabilistic regression.
Relative to the natural continuous-depth neural alternatives,
especially neural ODEs and neural SDEs, the advantage is not continuous
depth alone but continuous depth with a GP-based prior over the drift
family.

\section{Limitations}
\label{sec:limitations}

On the modelling side, CDGP's separable kernel limits the joint
state--depth dependence of the drift, while FlowDGP captures this
dependence through a deterministic CNF, so uncertainty in the drift
across depths is inherited solely from the base GP $f_0$. Our empirical
evaluation also focuses on regression tasks of moderate dimensionality,
and extending the approach to classification or structured prediction is
left to future work.

\begin{ack}
Ying Li and Alexandre Thiery acknowledge support from Singapore MOE AcRF
Tier 2 grants \mbox{MOE-T2EP20123-0010} and \mbox{MOE-T2EP20223-0011}.
The contribution of Petar Djuric is supported by the National Science
Foundation (NSF) under Award 2212506.
The contribution of Michael Zhang is funded by the Research Grants Council
under the General Research Fund \#17312826 and by the University of Hong Kong
Seed Fund for PI Research -- Basic Research \#2402101367.
\end{ack}

\bibliographystyle{plainnat}
\bibliography{references}

\begin{thebibliography}{20}
\providecommand{\natexlab}[1]{#1}
\providecommand{\url}[1]{\texttt{#1}}
\expandafter\ifx\csname urlstyle\endcsname\relax
  \providecommand{\doi}[1]{doi: #1}\else
  \providecommand{\doi}{doi: \begingroup \urlstyle{rm}\Url}\fi

\bibitem[Bertin-Mahieux(2011)]{yearpredictionmsd}
Thierry Bertin-Mahieux.
\newblock {Year Prediction MSD}.
\newblock UCI Machine Learning Repository, 2011.
\newblock {DOI}: 10.24432/C50K61.

\bibitem[Candanedo(2017)]{appliances_energy_prediction_374}
Luis Candanedo.
\newblock {Appliances Energy Prediction}.
\newblock UCI Machine Learning Repository, 2017.
\newblock {DOI}: 10.24432/C5VC8G.

\bibitem[Chen et~al.(2018)Chen, Rubanova, Bettencourt, and
  Duvenaud]{chen2018neural}
Ricky~TQ Chen, Yulia Rubanova, Jesse Bettencourt, and David~K Duvenaud.
\newblock Neural ordinary differential equations.
\newblock In \emph{Advances in Neural Information Processing Systems},
  volume~31, 2018.

\bibitem[Damianou and Lawrence(2013)]{damianou2013deep}
Andreas Damianou and Neil~D. Lawrence.
\newblock Deep {G}aussian processes.
\newblock In \emph{International Conference on Artificial Intelligence and
  Statistics}, pages 207--215, 2013.

\bibitem[De~Souza et~al.(2023)De~Souza, Nikitin, John, Ross, {\'A}lvarez,
  Deisenroth, Gomes, Mesquita, and Mattos]{de2023thin}
Daniel~Augusto De~Souza, Alexander Nikitin, ST~John, Magnus Ross, Mauricio~A
  {\'A}lvarez, Marc Deisenroth, Jo{\~a}o~Paulo Gomes, Diego Mesquita, and
  C{\'e}sar~Lincoln Mattos.
\newblock Thin and deep {G}aussian processes.
\newblock In \emph{Advances in Neural Information Processing Systems},
  volume~36, 2023.

\bibitem[Dunlop et~al.(2018)Dunlop, Girolami, Stuart, and
  Teckentrup]{dunlop2018howdeep}
Matthew~M Dunlop, Mark~A Girolami, Andrew~M Stuart, and Aretha~L Teckentrup.
\newblock How deep are deep {G}aussian processes?
\newblock \emph{Journal of Machine Learning Research}, 19\penalty0
  (54):\penalty0 1--46, 2018.

\bibitem[Duvenaud et~al.(2014)Duvenaud, Rippel, Adams, and
  Ghahramani]{duvenaud2014deep}
David Duvenaud, Oren Rippel, Ryan Adams, and Zoubin Ghahramani.
\newblock Avoiding pathologies in very deep networks.
\newblock In \emph{International Conference on Artificial Intelligence and
  Statistics}, pages 202--210, 2014.

\bibitem[Fanaee-T(2013)]{bike_sharing_275}
Hadi Fanaee-T.
\newblock Bike sharing.
\newblock UCI Machine Learning Repository, 2013.
\newblock {DOI}: 10.24432/C5W894.

\bibitem[Friedman and Nachman(2000)]{friedman2000networks}
Nir Friedman and Iftach Nachman.
\newblock {Gaussian} process networks.
\newblock In \emph{Proceedings of the Sixteenth Conference on Uncertainty in
  Artificial Intelligence}, 2000.

\bibitem[Hegde et~al.(2019)Hegde, Heinonen, L{\"a}hdesm{\"a}ki, and
  Kaski]{hegde2019deep}
Pashupati Hegde, Markus Heinonen, Harri L{\"a}hdesm{\"a}ki, and Samuel Kaski.
\newblock Deep learning with differential {G}aussian process flows.
\newblock In \emph{International Conference on Artificial Intelligence and
  Statistics}, pages 1--15, 2019.

\bibitem[Le et~al.(2005)Le, Smola, and Canu]{le2005heteroscedastic}
Quoc~V Le, Alex~J Smola, and St{\'e}phane Canu.
\newblock Heteroscedastic {G}aussian process regression.
\newblock In \emph{Proceedings of the 22nd international conference on Machine
  learning}, pages 489--496, 2005.

\bibitem[Li et~al.(2020)Li, Wong, Chen, and Duvenaud]{li2020scalable}
Xuechen Li, Ting-Kam~Leonard Wong, Ricky~TQ Chen, and David Duvenaud.
\newblock Scalable gradients for stochastic differential equations.
\newblock In \emph{International conference on artificial intelligence and
  statistics}, pages 3870--3882. PMLR, 2020.

\bibitem[Maro{\~n}as et~al.(2021)Maro{\~n}as, Hamelijnck, Knoblauch, and
  Damoulas]{maronas2021transforming}
Juan Maro{\~n}as, Oliver Hamelijnck, Jeremias Knoblauch, and Theodoros
  Damoulas.
\newblock Transforming {Gaussian} processes with normalizing flows.
\newblock In \emph{Proceedings of the 24th International Conference on
  Artificial Intelligence and Statistics}, volume 130, pages 1081--1089. PMLR,
  2021.

\bibitem[Perdikaris et~al.(2017)Perdikaris, Raissi, Damianou, Lawrence, and
  Karniadakis]{perdikaris2017nonlinear}
Paris Perdikaris, Maziar Raissi, Andreas Damianou, Neil~D. Lawrence, and
  George~Em Karniadakis.
\newblock Nonlinear information fusion algorithms for data-efficient
  multi-fidelity modelling.
\newblock \emph{Proceedings of the Royal Society A}, 473\penalty0
  (2198):\penalty0 20160751, 2017.

\bibitem[Rasmussen and Williams(2006)]{rasmussen2006gaussian}
Carl~Edward Rasmussen and Christopher K.~I. Williams.
\newblock \emph{Gaussian Processes for Machine Learning}.
\newblock MIT Press, Cambridge, MA, USA, 2006.

\bibitem[Salimbeni and Deisenroth(2017)]{salimbeni2017doubly}
Hugh Salimbeni and Marc Deisenroth.
\newblock Doubly stochastic variational inference for deep {G}aussian
  processes.
\newblock In \emph{Advances in Neural Information Processing Systems},
  volume~30, 2017.

\bibitem[Vijayakumar and Schaal(2000)]{vijayakumarn}
S~Vijayakumar and S~Schaal.
\newblock An o(n) algorithm for incremental real time learning in high
  dimensional space.
\newblock In \emph{Proc. of the Seventeenth International Conference on Machine
  Learning}, pages 1079--1086, 2000.

\bibitem[Wilson et~al.(2016)Wilson, Hu, Salakhutdinov, and
  Xing]{wilson2016deep}
Andrew~Gordon Wilson, Zhiting Hu, Ruslan Salakhutdinov, and Eric~P Xing.
\newblock Deep kernel learning.
\newblock In \emph{International Conference on Artificial Intelligence and
  Statistics}, pages 370--378, 2016.

\bibitem[Wyrwal et~al.(2025)Wyrwal, Krause, and Borovitskiy]{wyrwalresidual}
Kacper Wyrwal, Andreas Krause, and Viacheslav Borovitskiy.
\newblock Residual deep {G}aussian processes on manifolds.
\newblock In \emph{The Thirteenth International Conference on Learning
  Representations}, 2025.

\bibitem[Zhao et~al.(2021)Zhao, Emzir, and S{\"a}rkk{\"a}]{zhao2021deep}
Zheng Zhao, Muhammad Emzir, and Simo S{\"a}rkk{\"a}.
\newblock Deep state-space {Gaussian} processes.
\newblock \emph{Statistics and Computing}, 2021.

\end{thebibliography}

\clearpage

\appendix
\section{\texorpdfstring{Proof of
Proposition~\ref{prop:lipschitz_explosion}}{Proof of Proposition~}}\label{proof-of-proposition}

\label{app:optimization_stability}

Recall that the single-sample DGP objective is

\begin{align*}
\hat{\mathcal L}_{\mathrm{DGP}}(\boldsymbol{\theta})
=
\log p(\mathbf y \mid \mathbf f^{(L)})
-
\sum_{\ell=1}^{L}\mathrm{KL}\!\bigl[\cdot \,\Vert\, \cdot\bigr].
\end{align*}

Let $L_{\mathrm{DGP}}$ denote a Lipschitz constant of
$\hat{\mathcal L}_{\mathrm{DGP}}(\boldsymbol{\theta})$. By standard
smoothness arguments,

\begin{align*}
L_{\mathrm{DGP}}
\le
\sup_{\boldsymbol{\theta}}
\left\|
\nabla_{\boldsymbol{\theta}}
\hat{\mathcal L}_{\mathrm{DGP}}(\boldsymbol{\theta})
\right\|.
\end{align*}

Using the decomposition of the objective and the triangle inequality, we
obtain

\begin{align*}
L_{\mathrm{DGP}}
\le
L_{\mathrm{data}} + L_{\mathrm{KL}},
\end{align*}

where

\begin{align*}
L_{\mathrm{data}}
\coloneqq
\sup_{\boldsymbol{\theta}}
\left\|
\nabla_{\boldsymbol{\theta}}
\log p(\mathbf y \mid \mathbf f^{(L)})
\right\|,
\qquad
L_{\mathrm{KL}}
\coloneqq
\sup_{\boldsymbol{\theta}}
\left\|
\nabla_{\boldsymbol{\theta}}
\sum_{\ell=1}^{L}\mathrm{KL}\!\bigl[\cdot \,\Vert\, \cdot\bigr]
\right\|.
\end{align*}

We bound these two terms separately.

\begin{lemma}\label{lem:data_term_dgp}
Assume that

\begin{align*}
\left\|
\frac{\partial \log p(\mathbf y \mid \mathbf f^{(L)})}{\partial \mathbf h^{(L)}}
\right\|
\le C_{\mathrm{out}},
\qquad
\left\|
\frac{\partial f^{(k)}(\mathbf h^{(k-1)})}{\partial \boldsymbol{\theta}^{(k)}}
\right\|
\le C_{\mathrm{param}},
\end{align*}

for all $k=1,\dots,L$, and that each sampled layer function is
$\lambda_\ell$-Lipschitz with respect to its input, uniformly over
$\boldsymbol{\theta}$, i.e.

\begin{align*}
\left\|
\frac{\partial f^{(\ell)}(\mathbf h^{(\ell-1)})}{\partial \mathbf h^{(\ell-1)}}
\right\|
\le
\lambda_\ell,
\qquad \ell=1,\dots,L.
\end{align*}

Then

\begin{align*}
L_{\mathrm{data}}
\le
C
\left(
\sum_{k=1}^{L}\prod_{\ell=k+1}^{L}\lambda_\ell^2
\right)^{1/2},
\end{align*}

where $C \coloneqq C_{\mathrm{out}}C_{\mathrm{param}}$ is independent
of $L$.

\end{lemma}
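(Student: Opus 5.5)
We will bound the parameter-gradient of the data-fit term blockwise, by backpropagating through the layer composition. Split the parameter vector into layerwise blocks $\boldsymbol{\theta}=(\boldsymbol{\theta}^{(1)},\dots,\boldsymbol{\theta}^{(L)})$, where $\boldsymbol{\theta}^{(k)}$ collects the variational parameters and kernel hyperparameters of layer $k$. The Monte Carlo noise is held fixed, so $\mathbf h^{(L)}=\mathbf f^{(L)}$ is a deterministic composition of the sampled layer maps. Only layers $k,\dots,L$ depend on $\boldsymbol{\theta}^{(k)}$, and the dependence of layers after $k$ runs entirely through $\mathbf h^{(k)}$. The chain rule therefore gives
\begin{align*}
\nabla_{\boldsymbol{\theta}^{(k)}}\log p(\mathbf y\mid\mathbf f^{(L)})
=
\left(\frac{\partial f^{(k)}(\mathbf h^{(k-1)})}{\partial\boldsymbol{\theta}^{(k)}}\right)^{\!\top}
\left(\prod_{\ell=k+1}^{L}\frac{\partial f^{(\ell)}(\mathbf h^{(\ell-1)})}{\partial\mathbf h^{(\ell-1)}}\right)^{\!\top}
\frac{\partial\log p(\mathbf y\mid\mathbf f^{(L)})}{\partial\mathbf h^{(L)}},
\end{align*}
where the Jacobian product is ordered from layer $L$ down to layer $k+1$. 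When $k=L$ the product is empty and equals the identity.

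Next we use submultiplicativity of the operator norm together with the three hypotheses: the output-gradient bound $C_{\mathrm{out}}$, the parameter-Jacobian bound $C_{\mathrm{param}}$, and the input-Jacobian bounds $\lambda_\ell$. This gives the blockwise estimate
\begin{align*}
\bigl\|\nabla_{\boldsymbol{\theta}^{(k)}}\log p(\mathbf y\mid\mathbf f^{(L)})\bigr\|
\le C_{\mathrm{out}}C_{\mathrm{param}}\prod_{\ell=k+1}^{L}\lambda_\ell .
\end{align*}
The norm of the full gradient is the Euclidean norm over the blocks, $\|\nabla_{\boldsymbol{\theta}}\cdot\|^2=\sum_k\|\nabla_{\boldsymbol{\theta}^{(k)}}\cdot\|^2$. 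Summing the squared blockwise bounds and taking the square root yields $C(\sum_k\prod_{\ell>k}\lambda_\ell^2)^{1/2}$ with $C=C_{\mathrm{out}}C_{\mathrm{param}}$. Since every bound holds uniformly in $\boldsymbol{\theta}$, taking the supremum over $\boldsymbol{\theta}$ gives the claimed bound on $L_{\mathrm{data}}$.

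The main obstacle is making the chain rule legitimate. We need each sampled layer map to be differentiable both in its input and in its own parameters along the fixed noise realization; with reparameterized sparse-GP samples and smooth kernels this is standard. We also need each $\boldsymbol{\theta}^{(k)}$ to enter only layer $k$ directly, which is the layerwise-separation assumption implicit in writing $\partial f^{(k)}/\partial\boldsymbol{\theta}^{(k)}$.

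Some parameters may be shared across layers, for example a common likelihood variance. In that case we either absorb them into the output term, which is bounded by $C_{\mathrm{out}}$-type constants, or assume they are excluded from the blocks. The operator-norm bound for the input Jacobians uses the derivative form of the $\lambda_\ell$-Lipschitz hypothesis as stated in the lemma, so no further argument is needed there.
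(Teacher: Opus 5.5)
Your proposal is correct and follows the paper's proof essentially step for step. You use the chain rule through the layer composition, then submultiplicativity to get the blockwise bound $C_{\mathrm{out}}C_{\mathrm{param}}\prod_{\ell=k+1}^{L}\lambda_\ell$, then the Euclidean norm over the blocks $\boldsymbol{\theta}^{(k)}$ and a supremum over $\boldsymbol{\theta}$. Your transposed (column-gradient) form is just the adjoint of the paper's row-Jacobian product and changes nothing in the argument.
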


\begin{proof}

Write

\begin{align*}
\boldsymbol{\theta}
=
\bigl(
\boldsymbol{\theta}^{(1)},\dots,\boldsymbol{\theta}^{(L)}
\bigr),
\end{align*}

where $\boldsymbol{\theta}^{(k)}$ is the parameter block associated
with layer $k$. For each $k\in\{1,\dots,L\}$, by the chain rule,

\begin{align*}
\frac{\partial \log p(\mathbf y \mid \mathbf f^{(L)})}{\partial \boldsymbol{\theta}^{(k)}}
=
\frac{\partial \log p(\mathbf y \mid \mathbf f^{(L)})}{\partial \mathbf h^{(L)}}
\frac{\partial \mathbf h^{(L)}}{\partial \boldsymbol{\theta}^{(k)}}.
\end{align*}

Since the latent states satisfy

\begin{align*}
\mathbf h^{(\ell)} = f^{(\ell)}\!\bigl(\mathbf h^{(\ell-1)}\bigr)+\boldsymbol{\epsilon}^{(\ell)},
\end{align*}

and the additive noise $\boldsymbol{\epsilon}^{(\ell)}$ does not
depend on $\boldsymbol{\theta}$, we have

\begin{align*}
\frac{\partial \mathbf h^{(\ell)}}{\partial \mathbf h^{(\ell-1)}}
=
\frac{\partial f^{(\ell)}(\mathbf h^{(\ell-1)})}{\partial \mathbf h^{(\ell-1)}},
\qquad
\frac{\partial \mathbf h^{(k)}}{\partial \boldsymbol{\theta}^{(k)}}
=
\frac{\partial f^{(k)}(\mathbf h^{(k-1)})}{\partial \boldsymbol{\theta}^{(k)}}.
\end{align*}

Therefore,

\begin{align*}
\frac{\partial \mathbf h^{(L)}}{\partial \boldsymbol{\theta}^{(k)}}
=
\left(
\prod_{\ell=k+1}^{L}
\frac{\partial \mathbf h^{(\ell)}}{\partial \mathbf h^{(\ell-1)}}
\right)
\frac{\partial \mathbf h^{(k)}}{\partial \boldsymbol{\theta}^{(k)}}
=
\left(
\prod_{\ell=k+1}^{L}
\frac{\partial f^{(\ell)}(\mathbf h^{(\ell-1)})}{\partial \mathbf h^{(\ell-1)}}
\right)
\frac{\partial f^{(k)}(\mathbf h^{(k-1)})}{\partial \boldsymbol{\theta}^{(k)}}.
\end{align*}

Taking norms and using sub-multiplicativity of operator norms yields

\begin{align*}
\left\|
\frac{\partial \log p(\mathbf y \mid \mathbf f^{(L)})}{\partial \boldsymbol{\theta}^{(k)}}
\right\|
\le
\left\|
\frac{\partial \log p(\mathbf y \mid \mathbf f^{(L)})}{\partial \mathbf h^{(L)}}
\right\|
\prod_{\ell=k+1}^{L}
\left\|
\frac{\partial f^{(\ell)}(\mathbf h^{(\ell-1)})}{\partial \mathbf h^{(\ell-1)}}
\right\|
\left\|
\frac{\partial f^{(k)}(\mathbf h^{(k-1)})}{\partial \boldsymbol{\theta}^{(k)}}
\right\|.
\end{align*}

By the assumptions,

\begin{align*}
\left\|
\frac{\partial \log p(\mathbf y \mid \mathbf f^{(L)})}{\partial \boldsymbol{\theta}^{(k)}}
\right\|
\le
C_{\mathrm{out}}C_{\mathrm{param}}
\prod_{\ell=k+1}^{L}\lambda_\ell.
\end{align*}

Hence,

\begin{align*}
\left\|
\nabla_{\boldsymbol{\theta}}
\log p(\mathbf y \mid \mathbf f^{(L)})
\right\|
=
\left(
\sum_{k=1}^{L}
\left\|
\nabla_{\boldsymbol{\theta}^{(k)}}
\log p(\mathbf y \mid \mathbf f^{(L)})
\right\|^2
\right)^{1/2}
\le
C_{\mathrm{out}}C_{\mathrm{param}}
\left(
\sum_{k=1}^{L}\prod_{\ell=k+1}^{L}\lambda_\ell^2
\right)^{1/2}.
\end{align*}

Taking the supremum over $\boldsymbol{\theta}$ gives the result.

\end{proof}

\begin{lemma}\label{lem:kl_term_dgp}
Under the layerwise variational parameterization of the DGP objective,
the KL regularizer decomposes as

\begin{align*}
\sum_{\ell=1}^{L}\mathrm{KL}\!\bigl[\cdot \,\Vert\, \cdot\bigr]
=
\sum_{\ell=1}^{L}\mathrm{KL}_\ell(\boldsymbol{\theta}^{(\ell)}).
\end{align*}

If each layerwise KL term has uniformly bounded gradient, namely

\begin{align*}
\sup_{\boldsymbol{\theta}^{(\ell)}}
\left\|
\nabla_{\boldsymbol{\theta}^{(\ell)}}
\mathrm{KL}_\ell(\boldsymbol{\theta}^{(\ell)})
\right\|
\le
C_{\mathrm{KL}},
\qquad \ell=1,\dots,L,
\end{align*}

then

\begin{align*}
L_{\mathrm{KL}}(L)=O(\sqrt{L}).
\end{align*}

\end{lemma}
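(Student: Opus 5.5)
The plan is to exploit the block structure of the parameter vector, exactly as in the proof of Lemma~\ref{lem:data_term_dgp}. Write $\boldsymbol{\theta}=(\boldsymbol{\theta}^{(1)},\dots,\boldsymbol{\theta}^{(L)})$. Because the regularizer decomposes as $\sum_{\ell}\mathrm{KL}_\ell(\boldsymbol{\theta}^{(\ell)})$ and each summand depends only on its own block, the gradient of the sum with respect to block $\boldsymbol{\theta}^{(k)}$ is just $\nabla_{\boldsymbol{\theta}^{(k)}}\mathrm{KL}_k(\boldsymbol{\theta}^{(k)})$. All cross-derivatives $\nabla_{\boldsymbol{\theta}^{(k)}}\mathrm{KL}_\ell$ with $\ell\neq k$ vanish. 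This is the one structural point to state carefully, since it is what prevents any product or accumulation across layers, in contrast to the data term.

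Next, I would assemble the full gradient as a concatenation of its blocks and use the Euclidean norm on the product space:
\begin{align*}
\Bigl\|\nabla_{\boldsymbol{\theta}}\sum_{\ell=1}^{L}\mathrm{KL}_\ell\Bigr\|
=\Bigl(\sum_{k=1}^{L}\bigl\|\nabla_{\boldsymbol{\theta}^{(k)}}\mathrm{KL}_k(\boldsymbol{\theta}^{(k)})\bigr\|^2\Bigr)^{1/2}
\le \bigl(L\,C_{\mathrm{KL}}^2\bigr)^{1/2}=C_{\mathrm{KL}}\sqrt{L}.
\end{align*}
Taking the supremum over $\boldsymbol{\theta}$ then gives $L_{\mathrm{KL}}(L)\le C_{\mathrm{KL}}\sqrt{L}=O(\sqrt{L})$. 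Because $C_{\mathrm{KL}}$ is uniform in $\ell$, the implied constant does not depend on $L$.

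There is no real obstacle here. The only point needing care is bookkeeping: kernel hyperparameters must be genuinely layer-specific and not shared across layers. If some hyperparameters were shared, the corresponding block gradient would be a sum of up to $L$ terms and could scale like $L$ rather than $\sqrt{L}$. I would therefore state explicitly that the assumed layerwise parameterization assigns each parameter to exactly one block, which is what the hypothesis of the lemma encodes. Combining this result with Lemma~\ref{lem:data_term_dgp} through $L_{\mathrm{DGP}}\le L_{\mathrm{data}}+L_{\mathrm{KL}}$ then yields Proposition~\ref{prop:lipschitz_explosion}.
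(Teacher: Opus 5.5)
Your proposal is correct and follows the paper's proof essentially verbatim: block-separable gradient, Euclidean norm over the $L$ blocks bounded by $\sqrt{L}\,C_{\mathrm{KL}}$, then the supremum over $\boldsymbol{\theta}$. Your remark that every parameter, including kernel hyperparameters, must belong to exactly one layer block (otherwise a shared block's gradient could scale like $L$) usefully makes explicit what the paper's decomposition hypothesis leaves implicit.
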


\begin{proof}

Since the KL regularizer decomposes layerwise, we have

\begin{align*}
\nabla_{\boldsymbol{\theta}}
\sum_{\ell=1}^{L}\mathrm{KL}_\ell(\boldsymbol{\theta}^{(\ell)})
=
\bigl(
\nabla_{\boldsymbol{\theta}^{(1)}}\mathrm{KL}_1,\dots,
\nabla_{\boldsymbol{\theta}^{(L)}}\mathrm{KL}_L
\bigr).
\end{align*}

Therefore,

\begin{align*}
\left\|
\nabla_{\boldsymbol{\theta}}
\sum_{\ell=1}^{L}\mathrm{KL}_\ell(\boldsymbol{\theta}^{(\ell)})
\right\|
=
\left(
\sum_{\ell=1}^{L}
\left\|
\nabla_{\boldsymbol{\theta}^{(\ell)}}
\mathrm{KL}_\ell(\boldsymbol{\theta}^{(\ell)})
\right\|^2
\right)^{1/2}
\le
\left(
\sum_{\ell=1}^{L} C_{\mathrm{KL}}^2
\right)^{1/2}
=
\sqrt{L}\,C_{\mathrm{KL}}.
\end{align*}

Taking the supremum over $\boldsymbol{\theta}$ yields

\begin{align*}
L_{\mathrm{KL}}
\le
\sqrt{L}\,C_{\mathrm{KL}},
\end{align*}

and hence $L_{\mathrm{KL}}(L) = O(\sqrt{L})$.

\end{proof}

Combining Lemmas\textasciitilde{}\ref{lem:data_term_dgp}
and\textasciitilde{}\ref{lem:kl_term_dgp}, we obtain

\begin{align*}
L_{\mathrm{DGP}}
\le
C
\left(
\sum_{k=1}^{L}\prod_{\ell=k+1}^{L}\lambda_\ell^2
\right)^{1/2}
+
O(\sqrt{L}),
\end{align*}

where the $O(\sqrt{L})$ term captures the contribution of the KL
regularizer. This proves
Proposition~\ref{prop:lipschitz_explosion}.

\section{\texorpdfstring{Proof of
Theorem~\ref{thm:pathwise_stability}}{Proof of Theorem~}}\label{proof-of-theorem}

\label{app:proof_thm_3.1}

\begin{proof}[Proof of Theorem~\ref{thm:pathwise_stability}]
Define
\begin{align}
    \bm{\Delta}_t = \mathbf{h}_t^{(1)} - \mathbf{h}_t^{(2)},
\end{align}
so that $\bm{\Delta}_0=\mathbf{0}$.
Subtracting the two SDEs driven by the same Brownian motion gives
\begin{align}
    d\bm{\Delta}_t
    &=
    \big[ f(\mathbf{h}_t^{(1)}, t; \bm{\theta}_1) - f(\mathbf{h}_t^{(2)}, t; \bm{\theta}_2) \big]\,dt
    + (\sigma_t-\sigma_t)\,d\mathbf{w}_t.
\end{align}
Hence the stochastic terms cancel exactly and
\begin{align}
    \bm{\Delta}_t
    = \int_0^t \big[ f(\mathbf{h}_s^{(1)}, s; \bm{\theta}_1) - f(\mathbf{h}_s^{(2)}, s; \bm{\theta}_2) \big]\,ds.
\end{align}

Adding and subtracting $f(\mathbf{h}_s^{(2)}, s; \bm{\theta}_1)$ yields
\begin{align}
    \bm{\Delta}_t
    =
    \int_{0}^{t} \big[ f(\mathbf{h}_s^{(1)}, s; \bm{\theta}_1)
    - f(\mathbf{h}_s^{(2)}, s; \bm{\theta}_1) \big]
    +
    \big[ f(\mathbf{h}_s^{(2)}, s; \bm{\theta}_1)
    - f(\mathbf{h}_s^{(2)}, s; \bm{\theta}_2) \big] ds .
\end{align}
Taking norms and using the Lipschitz assumptions,
\begin{align*}
\big\| f(\mathbf{h}_s^{(1)}, s; \bm{\theta}_1) - f(\mathbf{h}_s^{(2)}, s; \bm{\theta}_1) \big\|
&\le L_h \|\bm{\Delta}_s\|,\\
\big\| f(\mathbf{h}_s^{(2)}, s; \bm{\theta}_1) - f(\mathbf{h}_s^{(2)}, s; \bm{\theta}_2) \big\|
&\le L_\theta \|\bm{\theta}_1-\bm{\theta}_2\|.
\end{align*}
Therefore, with $u(t)=\|\bm{\Delta}_t\|$ and $\delta_\theta=\|\bm{\theta}_1-\bm{\theta}_2\|$,
\begin{align}
\label{eq:integral_gronwall}
    u(t) \le \int_0^t \left( L_h u(s) + L_\theta \delta_\theta \right) \, ds.
\end{align}

Define
\begin{align}
    y(t) = \int_0^t \left( L_h u(s) + L_\theta \delta_\theta \right) \, ds,
\end{align}
so that $u(t)\le y(t)$ and $y(0)=0$.
Then
\begin{align}
    \frac{d y(t)}{d t}
    &= L_h u(t) + L_\theta \delta_\theta \nonumber\\
    &\le L_h y(t) + L_\theta \delta_\theta.
\end{align}
Multiplying by $e^{-L_h t}$ gives
\begin{align}
    \frac{d}{dt}\!\left[e^{-L_h t}y(t)\right]
    \le L_\theta e^{-L_h t}\delta_\theta.
\end{align}
Integrating from $0$ to $t$ and using $y(0)=0$, we obtain, for $L_h>0$,
\begin{align}
    e^{-L_h t} y(t)
    &\le
    L_\theta \delta_\theta \int_0^t e^{-L_h s}\,ds
    =
    \frac{L_\theta \delta_\theta}{L_h} (1 - e^{-L_h t}),
\end{align}
and hence
\begin{align}
    y(t)
    \le
    \frac{L_\theta}{L_h}(e^{L_h t}-1)\delta_\theta.
\end{align}
If $L_h=0$, Eq.~\eqref{eq:integral_gronwall} reduces to
\begin{align}
    u(t)\le \int_0^t L_\theta \delta_\theta\,ds = L_\theta t\,\delta_\theta.
\end{align}
Combining the two cases,
\begin{align}
    \|\mathbf{h}_t^{(1)} - \mathbf{h}_t^{(2)}\|
    \le
    L_\theta\,\phi(L_h,t)\,\|\bm{\theta}_1-\bm{\theta}_2\|,
\end{align}
where
\begin{align}
    \phi(L_h,t)
    =
    \begin{cases}
    \dfrac{e^{L_h t}-1}{L_h}, & L_h > 0,\\[0.4em]
    t, & L_h = 0.
    \end{cases}
\end{align}
\end{proof}

\section{\texorpdfstring{Proof of
Corollary~\ref{cor:contractive_stability}}{Proof of Corollary~}}\label{proof-of-corollary}

\label{app:proof_cor}

\begin{proof}

Starting from the pathwise ODE for $\bm{\Delta}_t$ established in the
proof of Theorem~\ref{thm:pathwise_stability}, we have

\begin{align*}
  \frac{1}{2}\frac{d}{dt}\|\bm{\Delta}_t\|^2
  =
  \bigl\langle
    \bm{\Delta}_t,\,
    f(\mathbf{h}_t^{(1)},t;\bm{\theta}_1)
    -
    f(\mathbf{h}_t^{(2)},t;\bm{\theta}_2)
  \bigr\rangle.
\end{align*}

Adding and subtracting $f(\mathbf{h}_t^{(2)},t;\bm{\theta}_1)$ yields

\begin{align*}
  \frac{1}{2}\frac{d}{dt}\|\bm{\Delta}_t\|^2
  =
  \underbrace{
    \bigl\langle
      \bm{\Delta}_t,\,
      f(\mathbf{h}_t^{(1)},t;\bm{\theta}_1)
      -
      f(\mathbf{h}_t^{(2)},t;\bm{\theta}_1)
    \bigr\rangle
  }_{(\mathrm{I})}
  +
  \underbrace{
    \bigl\langle
      \bm{\Delta}_t,\,
      f(\mathbf{h}_t^{(2)},t;\bm{\theta}_1)
      -
      f(\mathbf{h}_t^{(2)},t;\bm{\theta}_2)
    \bigr\rangle
  }_{(\mathrm{II})}.
\end{align*}

By the one-sided contractivity assumption,
$(\mathrm{I}) \le -\mu\|\bm{\Delta}_t\|^2$. For $(\mathrm{II})$,
Cauchy--Schwarz and the Lipschitz condition in $\bm{\theta}$ give

\begin{align*}
  (\mathrm{II})
  \le
  \|\bm{\Delta}_t\|\,L_\theta\|\bm{\theta}_1-\bm{\theta}_2\|.
\end{align*}

Applying Young's inequality with parameter $\mu$,

\begin{align*}
  \|\bm{\Delta}_t\|\,L_\theta\|\bm{\theta}_1-\bm{\theta}_2\|
  \le
  \frac{\mu}{2}\|\bm{\Delta}_t\|^2
  +
  \frac{L_\theta^2}{2\mu}\|\bm{\theta}_1-\bm{\theta}_2\|^2,
\end{align*}

and combining the two bounds yields

\begin{align*}
  \frac{d}{dt}\|\bm{\Delta}_t\|^2
  \le
  -\mu\|\bm{\Delta}_t\|^2
  +
  \frac{L_\theta^2}{\mu}\|\bm{\theta}_1-\bm{\theta}_2\|^2.
\end{align*}

Since $\bm{\Delta}_0=\mathbf{0}$, Gr\"onwall's inequality gives

\begin{align*}
  \|\bm{\Delta}_t\|^2
  \le
  \frac{L_\theta^2}{\mu^2}
  \bigl(1-e^{-\mu t}\bigr)
  \|\bm{\theta}_1-\bm{\theta}_2\|^2.
\end{align*}

Taking square roots and evaluating at $t=T$,

\begin{align*}
  \|\mathbf{h}_T^{(1)}-\mathbf{h}_T^{(2)}\|
  \le
  \frac{L_\theta}{\mu}\sqrt{1-e^{-\mu T}}\,
  \|\bm{\theta}_1-\bm{\theta}_2\|
  \le
  \frac{L_\theta}{\mu}\|\bm{\theta}_1-\bm{\theta}_2\|,
\end{align*}

for almost every sample path.

\end{proof}

\section{Complexity Analysis}

\label{app:complexity}

In CDGP, the inducing set is the Cartesian product
$\mathbf{Z}_h \times \mathbf{Z}_t$ of size $M_h M_t$. Exploiting the
Kronecker structure of the separable kernel, the per-iteration
complexity is $\mathcal{O}(M_h^3 + M_t^3 + NJ M_h M_t)$.

In FlowDGP, inference is performed only over the base GP $f_0$ with
$M$ inducing points, giving a per-iteration complexity of
$\mathcal{O}(M^3 + NM^2 + NJC_g)$, where $C_g$ is the cost of
evaluating $g_{\bm{\phi}}$. The reduction from $M_h M_t$ to $M$
inducing points more than compensates for the additional ODE solver
cost, resulting in substantially faster training in practice.

This complexity gap is reflected in the benchmark train-time results in
Table~\ref{tab:results}, where FlowDGP is already
substantially faster than CDGP on the smaller benchmark subsets. That
empirical gap is the main reason why CDGP and residual DGP are not
carried into the full-size large-scale experiment.

\section{Simulation Details}

\label{app:simutation_detail}

The experiments were run on a single NVIDIA RTX 4500 Ada Generation GPU
(24\,GB).   Note that, for each test input $\x_n$, we draw $S = 300$ Monte Carlo samples
$\{f_s(\x_n)\}_{s=1}^{S}$ from the model and form a Gaussian predictive 
distribution $\mathcal{N}(\mu_n, \sigma_n^2)$ whose moments are estimated 
from the samples:
\begin{equation}
    \mu_n = \frac{1}{S}\sum_{s=1}^{S} f_s(\x_n), 
    \qquad 
    \sigma_n^2 = \frac{1}{S}\sum_{s=1}^{S}\bigl(f_s(\x_n) - \mu_n\bigr)^2 
    + \sigma_{\text{noise}}^2.
\end{equation}
The negative log-likelihood is then evaluated as
\begin{equation}
    \mathrm{NLL} = \frac{1}{2N}\sum_{n=1}^{N}\left[
        \log(2\pi\sigma_n^2) + \frac{(y_n - \mu_n)^2}{\sigma_n^2}
    \right].
\end{equation}

\subsection{Datasets Description}

\label{app:dataset}

We evaluate on a one-dimensional toy example, three synthetic
benchmarks, three real-world regression tasks at two sample sizes,
and YearPredictionMSD. For the toy and benchmark datasets,
inputs are standardised per dimension and targets are standardised by
the training-set mean and standard deviation. The toy example is used
in Section~\ref{subsec:toy_example} with $N_{\mathrm{train}} = 300$
and $N_{\mathrm{test}} = 500$. For the benchmark experiments
(Section~\ref{subsec:benchmark}), each dataset uses
$N_{\mathrm{train}} = 300$ and $N_{\mathrm{test}} = 500$; for the
real-world tasks, we randomly subsample from the full datasets to this
size. For the large-scale experiments
(Section~\ref{subsec:large-scale}), the full versions of the real-world
datasets are used.

\paragraph{1D toy example (\texttt{toy\_example}).}
Inputs $x \in [0, 1]$ are drawn uniformly, and the scalar target is
$y = f(x) + \sigma(x)\,\varepsilon$ with
$\varepsilon \sim \mathcal{N}(0, 1)$. Writing
$\tilde{x} = (x - x_{\min})/(x_{\max} - x_{\min})$ for the normalized
input, the clean signal is a superposition of seven components, $f(x) = \sum_{i=1}^{7} f_i(\tilde{x})$, where
\begin{align*}
  f_1(\tilde{x}) &=
    \bigl[0.08 + 0.42\,\mathrm{sigmoid}(22(\tilde{x} - 0.28))
    + 0.82\,\tilde{x}^2\bigr]\sin\phi(\tilde{x}),
    \\
  f_2(\tilde{x}) &=
    0.12\sin(2\phi(\tilde{x}) + 6\tilde{x})
    + 0.10\sin(92\,\tilde{x}^2)
    \\
    &\qquad
    + 0.08\sin\!\bigl(36\tilde{x} + 2.2\sin(4\pi\tilde{x})\bigr)
    + 0.10(\tilde{x} - 0.48)^2,
    \\
  f_3(\tilde{x}) &=
    0.42\sin(2.2\pi\tilde{x}) + 0.38\sin(56\pi\tilde{x}),
    \\
  f_4(\tilde{x}) &=
    0.32\,\exp\!\bigl(-\tfrac{1}{2}(\tilde{x} - 0.5)^2 / 0.11^2\bigr)
    \sin(84\pi\tilde{x}),
    \\
  f_5(\tilde{x}) &=
    0.28\sqrt{\lvert \tilde{x} - 0.34 \rvert},
    \\
  f_6(\tilde{x}) &=
    0.45\,\mathrm{sigmoid}\!\bigl(110(\tilde{x} - 0.58)\bigr),
    \\
  f_7(\tilde{x}) &=
    0.32\bigl[
      w(\tilde{x})\,
      \sin\!\bigl(9\tilde{x}(0.65 + 0.35\tilde{x})\bigr)
      \\
      &\qquad
      + (1 - w(\tilde{x}))\sin(52\tilde{x})
    \bigr],
\end{align*}
with chirp phase
$\phi(\tilde{x}) = \tfrac{\omega_{\mathrm{hi}}}{\beta}
\bigl(1 - e^{-\beta\tilde{x}}\bigr) + \omega_{\mathrm{lo}}\,\tilde{x}$
(parameters $\omega_{\mathrm{hi}} = 88$, $\omega_{\mathrm{lo}} = 5$,
$\beta = 9.5$) and regime-switching gate
$w(\tilde{x}) = \mathrm{sigmoid}(28(\tilde{x} - 0.45))$. The seven
components respectively contribute: (1) a chirp with growing envelope,
(2) second-harmonic, quadratic-phase, and FM terms with a mild
polynomial trend, (3) spectrally conflicting slow and fast sinusoids,
(4) a localised high-frequency burst inside a narrow Gaussian window,
(5) a square-root cusp, (6) a near-discontinuous step, and (7) a regime
blend whose dominant frequency differs sharply between the two sides of
the domain. Noise is heteroscedastic with
$\sigma(x) = 0.16\bigl(0.55 + 0.9\,\tilde{x}\bigr)$, so the output
variance grows roughly $2.6\times$ from left to right. We use
$N_{\mathrm{train}} = 300$ and $N_{\mathrm{test}} = 500$; inputs and
targets are standardised using training-set statistics. Although the
task is strictly one-dimensional ($D_x = 1$), the combination of
multi-scale sinusoids, a localised burst, a cusp, a near-step, and a
regime change makes it adversarial to any model with a single
stationary lengthscale.

\paragraph{Heteroscedastic (\texttt{heteroscedastic}).}
Inputs $\x \in \mathbb{R}^{D_x}$ with $D_x = 10$ are drawn uniformly on
$[-4, 4]^{D_x}$. The latent signal is
\begin{equation*}
    g(\x) = \sin(2x_1) + 0.5\cos(x_2) + 0.3 x_3,
    \qquad
    f_k(\x) = g(\x) + 0.2\sin(k x_1),
    \quad k = 1, \dots, 10,
\end{equation*}
and intermediate observations are
$\tilde{y}_k = f_k(\x) + \sigma(\x)\,\varepsilon_k$ with
$\varepsilon_k \sim \mathcal{N}(0, 1)$ and input-dependent noise scale
\begin{equation*}
    \sigma(\x) = 0.1 + 0.5\,\mathrm{sigmoid}(1.5 x_1 - 0.5 x_2).
\end{equation*}
The scalar target is $y = \frac{1}{10}\sum_{k=1}^{10} \tilde{y}_k$, so
the effective noise is heteroscedastic with roughly a $5\times$
variance ratio across the input space. This benchmark evaluates a
model's ability to fit input-dependent aleatoric uncertainty.

\paragraph{Nonstationary (\texttt{nonstationary}).}
Inputs $\x \in \mathbb{R}^{D_x}$ with $D_x = 10$ are drawn uniformly on
$[-4, 4]^{D_x}$. The clean signal is a smooth two-regime blend governed
by the first coordinate,
\begin{equation*}
    f(\x) = \bigl(1 - w(x_1)\bigr)\sin(x_1) + w(x_1)\cos(x_1) + 0.2\,x_2,
    \qquad
    w(x_1) = \mathrm{sigmoid}(x_1),
\end{equation*}
with homoscedastic Gaussian noise
$\varepsilon \sim \mathcal{N}(0, 0.2^2)$. The change in effective
lengthscale between the $x_1 \!\ll\! 0$ and $x_1 \!\gg\! 0$ regimes is
the principal challenge: a single stationary kernel cannot match both
regimes simultaneously.

\paragraph{Heavy-tailed (\texttt{heavy\_tailed}).}
Inputs $\x \in \mathbb{R}^{D_x}$ with $D_x = 10$ are drawn uniformly on
$[-5, 5]^{D_x}$, and the latent signal takes the additive form
\begin{equation*}
    f(\x) = \frac{1}{D_x}\sum_{i=1}^{D_x}
    \bigl(\sin(1.5 x_i) + 0.3 x_i\bigr).
\end{equation*}
Observations are $y = f(\x) + \varepsilon$ with Student-$t$ noise
$\varepsilon \sim t_{\nu}(0, s)$, where $\nu = 3$ and $s = 0.3$. The
fat tails induce a non-negligible fraction of gross outliers, which
stresses baselines that assume Gaussian noise.

\paragraph{SARCOS (\texttt{sarcos}, \texttt{sarcos\_small}).}
The SARCOS inverse-dynamics benchmark~\citep{vijayakumarn}: given
$D_x = 21$ joint-state features (seven positions, seven velocities,
seven accelerations) from a seven-degree-of-freedom anthropomorphic
robot arm, predict the first joint torque. We use the standard split
with $44{,}484$ training and $4{,}449$ test samples in the large-scale
setting, and subsample to $N_{\mathrm{train}} = 300$ and
$N_{\mathrm{test}} = 500$ in the benchmark setting
(\texttt{sarcos\_small}). Features and targets are standardised using
the full parent training set.

\paragraph{UCI Bike (\texttt{uci\_bike}, \texttt{uci\_bike\_small}).}
The UCI Bike Sharing dataset~\citep{bike_sharing_275}: the target is
the hourly rental count (\texttt{cnt}), and the inputs are
$D_x = 12$ weather and calendar features (\texttt{season},
\texttt{yr}, \texttt{mnth}, \texttt{hr}, \texttt{holiday},
\texttt{weekday}, \texttt{workingday}, \texttt{weathersit},
\texttt{temp}, \texttt{atemp}, \texttt{hum}, \texttt{windspeed}). We
apply an $80/20$ time-ordered split with per-feature $z$-scoring and
target standardisation, yielding $13{,}903$ training and $3{,}476$ test
samples in the large-scale setting; the benchmark setting subsamples to
$N_{\mathrm{train}} = 300$ and $N_{\mathrm{test}} = 500$. The target is
highly non-Gaussian (count data with mixed diurnal and seasonal
regimes), making this a useful test of non-stationarity.

\paragraph{UCI Energy (\texttt{uci\_energy}, \texttt{uci\_energy\_small}).}
The UCI Appliances Energy Prediction
dataset~\citep{appliances_energy_prediction_374}: the target is
appliance energy consumption (Wh), and the inputs are $D_x = 29$ indoor
sensor readings and weather variables, augmented with four derived
temporal features (hour of day, day of week, month, weekend indicator)
parsed from the timestamp. Metadata columns (\texttt{date},
\texttt{rv1}, \texttt{rv2}) and the target itself are excluded from the
inputs. We apply a chronological $80/20$ split with $z$-scoring,
yielding $15{,}788$ training and $3{,}947$ test samples in the
large-scale setting; the benchmark setting subsamples to
$N_{\mathrm{train}} = 300$ and $N_{\mathrm{test}} = 500$. Targets are
strongly right-skewed with sparse high-activity spikes, producing
effectively heavy-tailed residuals in small samples.

\paragraph{YearPredictionMSD.}
\label{app:yearprediction}
YearPredictionMSD~\citep{yearpredictionmsd} is a song-release-year
prediction dataset containing $515{,}345$ observations, each represented
by 90 numerical audio features. The task is scalar regression from audio
characteristics to release year.

\paragraph{Summary.}
Table~\ref{tab:datasets} summarises the input dimension, sample sizes
(benchmark setting), and principal regression challenge targeted by
each dataset. All tasks are scalar regression ($D_y = 1$); sample sizes
for the corresponding large-scale datasets (\texttt{sarcos},
\texttt{uci\_bike}, \texttt{uci\_energy}) are given in the paragraphs
above.

\begin{table}[h]
\centering
\small
\renewcommand{\arraystretch}{1.15}
\setlength{\tabcolsep}{6pt}
\begin{tabular}{lcccl}
\toprule
Dataset & $D_x$ & $N_{\mathrm{train}}$ & $N_{\mathrm{test}}$
  & Primary challenge \\
\midrule
\texttt{heteroscedastic}     & $10$     & $300$      & $500$
  & input-dependent noise \\
\texttt{nonstationary}       & $10$     & $300$      & $500$
  & two-regime lengthscale change \\
\texttt{heavy\_tailed}       & $10$     & $300$      & $500$
  & Student-$t$ ($\nu\!=\!3$) outliers \\
\midrule
\texttt{sarcos\_small}       & $21$     & $300$      & $500$
  & high-dimensional nonlinear dynamics \\
\texttt{uci\_bike\_small}    & $12$     & $300$      & $500$
  & mixed diurnal/seasonal regimes \\
\texttt{uci\_energy\_small}  & $29$     & $300$      & $500$
  & skewed, spiky real sensor data \\
\midrule
\texttt{sarcos}              & $21$     & $44{,}484$ & $4{,}449$
  & high-dimensional nonlinear dynamics \\
\texttt{uci\_bike}           & $12$     & $13{,}903$ & $3{,}476$
  & mixed diurnal/seasonal regimes \\
\texttt{uci\_energy}         & $29$     & $15{,}788$ & $3{,}947$
  & skewed, spiky real sensor data \\
\bottomrule
\end{tabular}
\caption{Summary of evaluation datasets. The first block lists the
three synthetic benchmarks, the second block the subsampled real-world
datasets used in Section~\ref{subsec:benchmark}, and the third block
their full-size counterparts used in
Section~\ref{subsec:large-scale}. All tasks are scalar regression with
$D_y = 1$.}
\label{tab:datasets}
\end{table}

\subsection{Hyperparameter Settings}

\label{app:setting}

To ensure reproducibility, we provide the hyperparameter configurations
for all methods. Table~\ref{tab:grid-search} lists the
hyperparameters selected via grid search, and
Table~\ref{tab:fixed-hparams} reports the fixed settings
shared across experiments. For each method and dataset, hyperparameters
were selected based on validation performance, considering both NLL and
RMSE. We first identified configurations whose validation NLL was close
to the best-performing one, and among them chose the configuration with
the lowest validation RMSE. This favors models that achieve strong
probabilistic performance without sacrificing point prediction accuracy.
In these tables, $J$ is the number of integration steps and $M_t$ is the
number of inducing locations along depth for CDGP. Neither denotes the
terminal horizon $T$, which is $1.0$ for both CDGP and FlowDGP unless
varied explicitly.

\begin{table}[t]
\centering
\caption{Grid-searched hyperparameters for the benchmark methods. For each method and dataset, we select the configuration with the lowest validation NLL.}
\label{tab:grid-search}
\small
\renewcommand{\arraystretch}{1.15}
\setlength{\tabcolsep}{8pt}
\resizebox{\textwidth}{!}{%
\begin{tabular}{@{}L{2.4cm} L{13.0cm}@{}}
\toprule
\textbf{Method} & \textbf{Search space} \\
\midrule
\rowcolor{black!4}
SparseGP
  & \texttt{lr} $\in\{0.01, 0.02, 0.03, 0.05\}$ \\
HeteroskedasticGP
  & \texttt{lr} $\in\{0.01, 0.02, 0.03, 0.05\}$ \\
\rowcolor{black!4}
DKL
  & \texttt{grid\_size} $\in\{80, 100, 120\}$ \\
\rowcolor{black!4}
  & \texttt{num\_dims} $\in\{2, 3\}$ \\
\rowcolor{black!4}
  & \texttt{lr} $\in\{0.01, 0.02, 0.03, 0.05\}$ \\
TDGP
  & \texttt{m\_v} $\in\{20, 30\}$ \\
  & \texttt{m\_u} $\in\{20, 30\}$ \\
  & \texttt{lr} $\in\{0.01, 0.02, 0.03, 0.05\}$ \\
\rowcolor{black!4}
DGP
  & \texttt{num\_layers} $\in\{2, 3, 5\}$ \\
\rowcolor{black!4}
  & \texttt{lr} $\in\{0.01, 0.02, 0.03, 0.05\}$ \\
ResidualDGP
  & \texttt{num\_layers} $\in\{2, 3, 5\}$ \\
  & \texttt{lr} $\in\{0.01, 0.02, 0.03, 0.05\}$ \\
\rowcolor{black!4}
NODE
  & \texttt{hidden\_dim} $\in\{32, 64\}$ \\
\rowcolor{black!4}
  & \texttt{num\_layers} $\in\{1, 2\}$ \\
\rowcolor{black!4}
  & $J$ $\in\{2, 3, 5\}$ \\
\rowcolor{black!4}
  & \texttt{lr} $\in\{0.01, 0.02, 0.03, 0.05\}$ \\
NSDE
  & \texttt{hidden\_dim} $\in\{32, 64\}$ \\
  & $J$ $\in\{2, 3, 5\}$ \\
  & \texttt{lr} $\in\{0.01, 0.02, 0.03, 0.05\}$ \\
\rowcolor{black!4}
CDGP
  & $J$ $\in\{2, 3, 5\}$ \\
\rowcolor{black!4}
  & $M_t$ $\in\{3, 5\}$ \\
\rowcolor{black!4}
  & \texttt{lr} $\in\{0.01, 0.02, 0.03, 0.05\}$ \\
FlowDGP
  & \texttt{hidden\_dims} $\in\{[32,32], [64,64]\}$ \\
  & $J$ $\in\{2, 3, 5\}$ \\
  & \texttt{lr} $\in\{0.01, 0.02, 0.03, 0.05\}$ \\
\bottomrule
\end{tabular}
}
\end{table}

\begin{table}[t]
\centering
\caption{Fixed hyperparameter settings for the toy and benchmark experiments. Method-specific entries override the defaults.}
\label{tab:fixed-hparams}
\small
\renewcommand{\arraystretch}{1.15}
\setlength{\tabcolsep}{8pt}
\resizebox{\textwidth}{!}{%
\begin{tabular}{@{}L{3.2cm} L{12.2cm}@{}}
\toprule
\textbf{Scope} & \textbf{Setting} \\
\midrule
\rowcolor{black!4}
Defaults
  & \texttt{num\_inducing} $= 30$ (whenever applicable) \\
\rowcolor{black!4}
  & \texttt{epochs} $= 2500$ \\
\rowcolor{black!4}
  & optimiser $=$ Adam \\
DKL
  & \texttt{epochs} $= 500$ \\
\rowcolor{black!4}
CDGP
  & $\sigma = 0.15$ \\
\rowcolor{black!4}
  & $T = 1.0$ (unless otherwise specified) \\
FlowDGP
  & $\sigma = 0.15$ \\
  & $T = 1.0$ (unless otherwise specified) \\
\rowcolor{black!4}
NODE
  & $T = 1.0$ \\
\rowcolor{black!4}
  & \texttt{solver} $=$ Euler \\
\rowcolor{black!4}
  & \texttt{rtol} $= 10^{-4}$,\;
    \texttt{atol} $= 10^{-5}$ \\
\rowcolor{black!4}
  & \texttt{adjoint} $=$ False \\
NSDE
  & $T = 1.0$ \\
  & \texttt{solver} $=$ Euler \\
  & \texttt{activation} $=$ LipSwish \\
\rowcolor{black!4}
TDGP
  & \texttt{diag\_qmu} $=$ True \\ 
  \rowcolor{black!4} & \texttt{epochs=1000} \\ 
TGP
  & \texttt{num\_blocks} $=3$ (SAL) \\
  & \texttt{hidden\_dims} $=[50,50]$ \\
  & \texttt{epochs} $=15000$ \\
  & \texttt{lr} $=0.001$ \\
\bottomrule
\end{tabular}
}
\end{table}

\section{Additional Experimental Comparisons}
\label{app:additional-experiments}

\subsection{Large-scale comparison on YearPredictionMSD}
\label{app:yearprediction-results}

We compare FlowDGP with DKL, NODE and NSDE on YearPredictionMSD
(Appendix~\ref{app:yearprediction}). Table~\ref{tab:yearprediction} reports
the results: FlowDGP achieves the lowest mean RMSE and NLL, while NSDE
achieves the lowest mean CRPS.

\begin{table}[htbp]
\centering
\caption{YearPredictionMSD results, mean $\pm$ standard deviation over
random seeds. Lower is better; bold marks the lowest mean.}
\label{tab:yearprediction}
\small
\begin{tabular}{lccc}
\toprule
Method & RMSE & NLL & CRPS\\
\midrule
DKL & $0.8090 \pm 0.0022$ & $1.2078 \pm 0.0023$ & $0.4243 \pm 0.0012$\\
NODE & $0.8096 \pm 0.0018$ & $1.2088 \pm 0.0025$ & $0.4252 \pm 0.0018$\\
NSDE & $0.8075 \pm 0.0022$ & $1.2074 \pm 0.0033$ & $\mathbf{0.4227 \pm 0.0016}$\\
FlowDGP & $\mathbf{0.8052 \pm 0.0015}$ & $\mathbf{1.2027 \pm 0.0023}$ & $0.4234 \pm 0.0016$\\
\bottomrule
\end{tabular}
\end{table}

\subsection{A controlled contraction illustration}
\label{app:contraction-diagnostic}

To illustrate Corollary~\ref{cor:contractive_stability}, we consider
an auxiliary one-dimensional SDE, separate from the fitted CDGP and
FlowDGP models. Paired paths share the same Brownian motion, and we measure
\[
S(T)=\frac{|h_T(\theta_2)-h_T(\theta_1)|}{|\theta_2-\theta_1|}.
\]
The contractive drift is $f(h;\theta)=-\mu h-0.5\tanh(h)+\theta$;
the control is $f(h;\theta)=0.25h+\theta$.
For the contractive drift,
$\partial_h f=-\mu-0.5\,\mathrm{sech}^2(h)\le-\mu$ and $L_\theta=1$.
With equal initial conditions and shared parameter-independent additive
noise, the corollary bounds sensitivity by $1/\mu$.
Table~\ref{tab:contraction-diagnostic} reports the results from
5,000 paired paths. Sensitivity levels off below the bound in the
contractive cases, while the control increases with the horizon.

\begin{table}[htbp]
\centering
\caption{Controlled SDE illustration of Corollary~\ref{cor:contractive_stability}.}
\label{tab:contraction-diagnostic}
\begin{tabular}{lccccc}
\toprule
Drift & $\mu$ & $S(1)$ & $S(4)$ & $S(8)$ & $L_\theta/\mu$\\
\midrule
Contractive & 0.5 & 0.637 & 1.002 & 1.023 & 2.000\\
Contractive & 1.0 & 0.521 & 0.672 & 0.673 & 1.000\\
Contractive & 2.0 & 0.368 & 0.401 & 0.401 & 0.500\\
Non-contractive & --- & 1.136 & 6.870 & 25.542 & ---\\
\bottomrule
\end{tabular}
\end{table}

\clearpage
\subsection{Learned drift evolution}
\label{app:drift-visualization}

We use the toy dataset to visualize how the learned drift and latent
state evolve with continuous depth in CDGP and FlowDGP.

Both models use $T=1$, $J=3$, 30 inducing points for each state or base GP,
and 2,500 training epochs; CDGP uses five depth inducing points.
The FlowDGP drift posterior mean is estimated using 512 base-GP samples.
We plot eight latent trajectories for each of seven inputs.

Figure~\ref{fig:toy-drift} plots the drift posterior means at fixed
states for CDGP and fixed inputs for FlowDGP, together with sampled
latent trajectories. In this toy example, CDGP's drift means change
modestly with $t$, whereas FlowDGP's vary more substantially.
The latent trajectories show a similar pattern, with more pronounced
changes across depth for FlowDGP.

\begin{figure}[htbp]
\centering
\includegraphics[width=\linewidth]{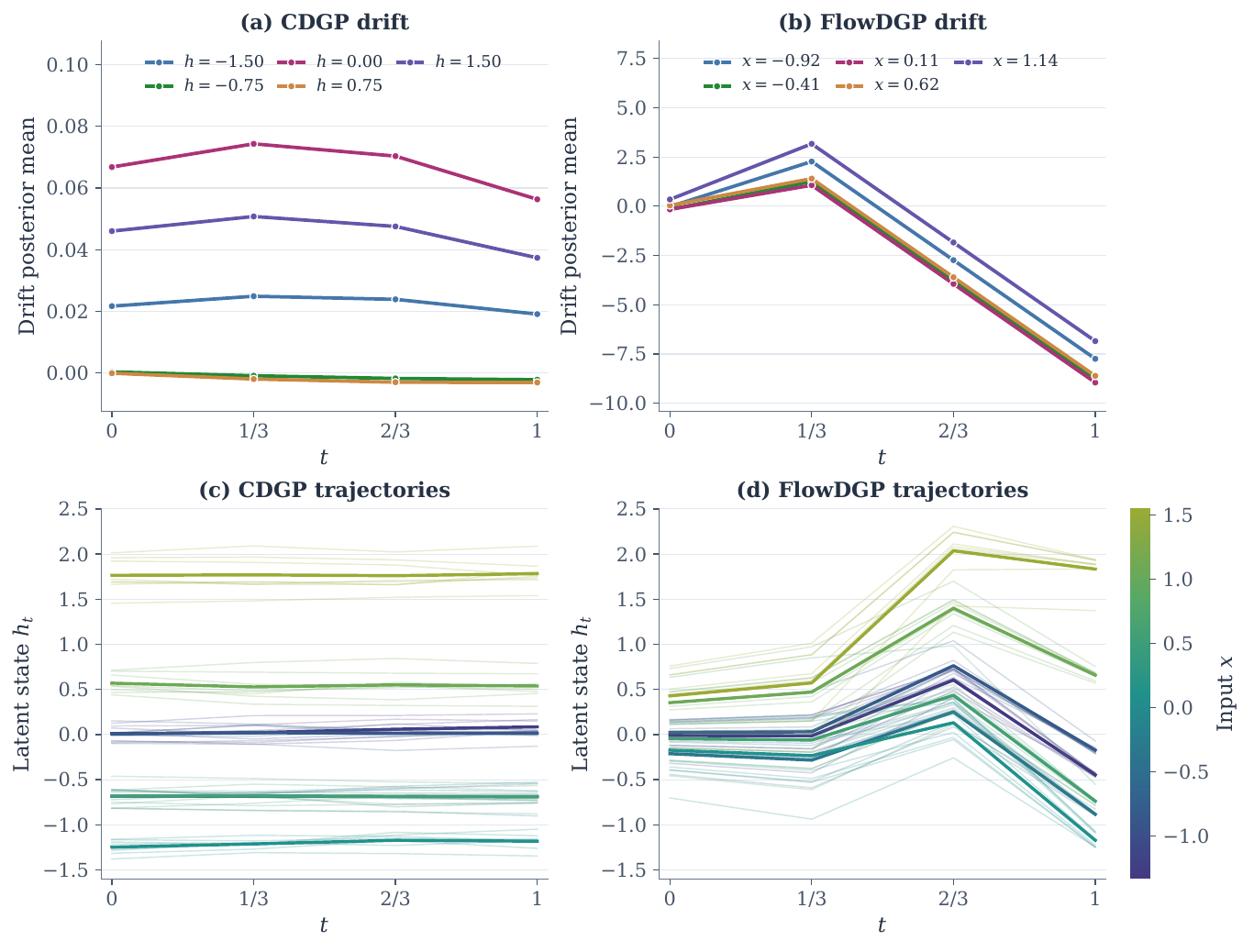}
\caption{\textbf{Learned drift evolution and latent trajectories on the toy dataset.}
Top: drift posterior means versus $t$ at fixed states $h$ for CDGP
and fixed inputs $x$ for FlowDGP. Bottom: sampled latent trajectories
(thin lines) and their means (thick lines), colored by input.}
\label{fig:toy-drift}
\end{figure}
\clearpage

\section{LLM Usage}

Large language models were used solely for editing and polishing the
writing of this paper.

\end{document}